\newcommand{\red}[1]{{\color{red}#1}}
\newcommand{\thickhat}[1]{\mathbf{\hat{\text{$#1$}}}}
\theoremstyle{plain}
\newtheorem{proposition}{Proposition}
\theoremstyle{remark}
\newtheorem{remark}{Remark}
\definecolor{blue}{RGB}{0, 10, 255}
\newcommand{\blue}[1]{{\color{blue}#1}}
\definecolor{cvprblue}{rgb}{0.21,0.49,0.74}
\title{EfficientViM: Efficient Vision Mamba with \\Hidden State Mixer based State Space Duality}
\author{\textbf{Sanghyeok Lee}$^1$ \hspace{0.4cm} \textbf{Joonmyung Choi}$^1$ \hspace{0.4cm}  \textbf{Hyunwoo J. Kim}$^2$\thanks{Corresponding author.} \vspace{0.4cm}
\\
$^1$Korea University \hspace{0.4cm}
$^2$KAIST\\
{\tt\small \{\href{mailto:cat0626@korea.ac.kr}{cat0626}, \href{mailto:pizard@korea.ac.kr}{pizard}\}@korea.ac.kr \hspace{0.4cm}
\href{mailto:hyunwoojkim@korea.ac.kr}{hyunwoojkim}@kaist.ac.kr}
}
\begin{document}
\maketitle
\begin{abstract}
For the deployment of neural networks in resource-constrained environments, prior works have built lightweight architectures with convolution and attention for capturing local and global dependencies, respectively.
Recently, the state space model (SSM) has emerged as an effective operation for global interaction with its favorable linear computational cost in the number of tokens.
To harness the efficacy of SSM, we introduce Efficient Vision Mamba (\textbf{EfficientViM}), a novel architecture built on hidden state mixer-based state space duality (\textbf{HSM-SSD}) that efficiently captures global dependencies with further reduced computational cost.
With the observation that the runtime of the SSD layer is driven by the linear projections on the input sequences, we redesign the original SSD layer to perform the channel mixing operation within compressed hidden states in the HSM-SSD layer.
Additionally, we propose multi-stage hidden state fusion to reinforce the representation power of hidden states and provide the design to alleviate the bottleneck caused by the memory-bound operations.
As a result, the EfficientViM family achieves a new state-of-the-art speed-accuracy trade-off on ImageNet-1k, offering up to a 0.7\% performance improvement over the second-best model SHViT with faster speed.
Further, we observe significant improvements in throughput and accuracy compared to prior works, when scaling images or employing distillation training.
Code is available at \url{https://github.com/mlvlab/EfficientViM}.
\end{abstract}
\begin{figure}[t!]
     \centering
     \includegraphics[width=\columnwidth]{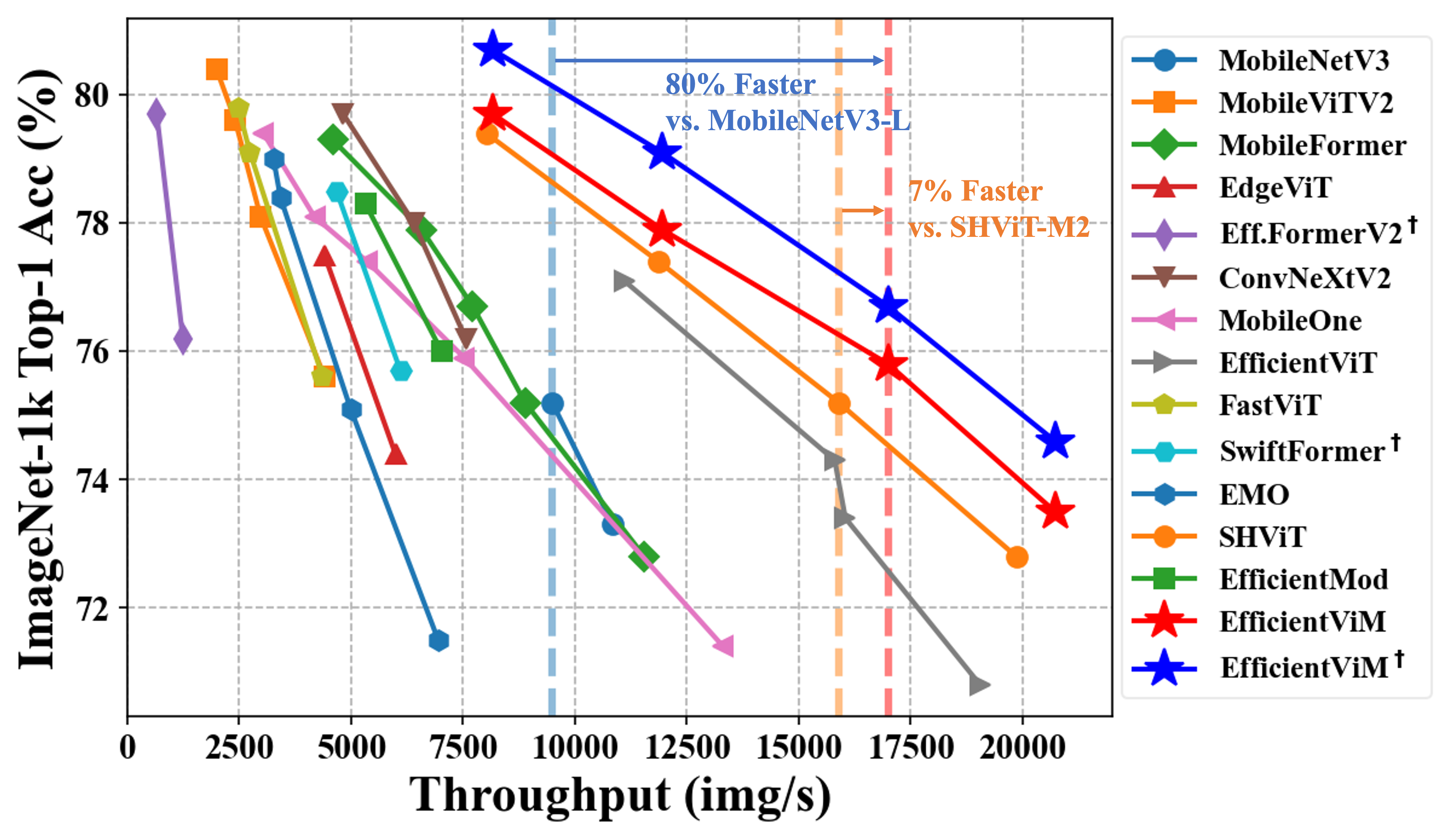}
     \vspace{-15pt}
    \caption{\textbf{Comparison of efficient networks on ImageNet-1K~\cite{deng2009imagenet} classification.} The family of our EfficientViM, marked as \red{red} and \blue{blue} stars, shows the best speed-accuracy trade-offs. \ding{61} indicates the model trained with distillation following~\cite{touvron2021training}.}
    \label{fig:intro}
    \vspace{-15pt}
\end{figure}
\section{Introduction}
Efficient vision architectures for resource-constrained environments have been consistently explored in computer vision tasks, including image classification~\cite{howard2017mobilenets,iandola2016squeezenet,tan2019efficientnet,zhang2018shufflenet,chollet2017xception,mehta2021mobilevit,mehta2022separable}, object detection~\cite{tan2020efficientdet,wang2024gold,li2017mimicking,redmon2016you}, segmentation~\cite{xie2021segformer,Xiong_2024_CVPR,xu2023pidnet,hu2023you,zhao2018icnet}, etc. 
Earlier works~\cite{chollet2017xception,han2020ghostnet,ma2018shufflenet} have explored the efficient convolutional neural networks (CNN).
One such technique is depthwise separable convolution (DWConv), introduced in Xception~\cite{chollet2017xception}, which has been widely adopted for modeling lightweight architectures, especially CNN-based networks including MobileNet~\cite{howard2017mobilenets,sandler2018mobilenetv2,howard2019searching} and following works~\cite{vasu2023mobileone,tan2019efficientnet}.\\
Meanwhile, with the advent of Vision Transformer (ViT) \cite{dosovitskiy2020image}, attention mechanisms have become a key operation for capturing long-range dependencies within image patches.
However, the quadratic cost of self-attention presents challenges for designing efficient architectures.
To address this, prior works have attempted to approximate self-attention with reduced cost~\cite{wang2020linformer,kitaev2020reformer,xiong2021nystromformer,choromanski2020rethinking}, or restrict the number of tokens~\cite{liu2021swin,chu2021twins, bolya2022token,lee2024multi,yin2022vit,choi2024vid}.
For on-device deployment, several works~\cite{mehta2021mobilevit, mehta2022separable,vasu2023fastvit, zhang2023rethinking, chen2022mobile, liu2023efficientvit,yun2024shvit,pan2022edgevits} devoted effort to developing hybrid ViTs combined with CNNs (DWConv).
While prior works have demonstrated superior performance over traditional CNNs, the inherent quadratic complexity of self-attention in the number of tokens remains a major bottleneck limiting their efficiency and scalability.
\\
Recently, state space models (SSMs)~\cite{fu2022hungry,gu2021efficiently,smith2022simplified,mamba,mamba2} have emerged as a promising alternative to self-attention, offering a favorable linear computational complexity while maintaining a global receptive field.
Mamba~\cite{mamba,mamba2} has introduced selective scanning mechanisms on SSM (S6) to process sequences with the hardware-aware parallel algorithm.
Following Mamba, vision Mambas~\cite{zhu2024vision, liu2024vmamba,yang2024plainmamba,huang2024localmamba} have extended the concept of SSM to vision tasks.
These works have studied multi-path scanning mechanisms to address the causal constraints of SSM that are not desirable for image processing.
More recent works like VSSD~\cite{shi2024vssd} and Linfusion~\cite{liu2024linfusion} further eliminated the causal mask in the state space duality (SSD) of Mamba2~\cite{mamba2}, introducing non-causal state space duality (NC-SSD).
While vision Mambas demonstrate improved performance over previous SOTA methods, they still have relatively slower speeds than lightweight vision models.
Further, we have observed that the major bottleneck of the previous SSD is the linear projection in gating operation and output projection.
\\
\noindent In this paper, we present an \textbf{Efficient} \textbf{Vi}sion \textbf{M}amba (\textbf{EfficientViM}), a new family of mamba-based lightweight vision backbone built with a fast and effective SSD layer called \textbf{H}idden \textbf{S}tate \textbf{M}ixer-based \textbf{SSD} (\textbf{HSM-SSD}).
In the HSM-SSD layer, we transfer the channel mixing operations of the standard SSD layer, including linear projection and gating function, from the image feature space to the hidden state space. 
These hidden states serve as compressed latent representations of the input.
We observe that this design mitigates the major bottleneck of the SSD layer while maintaining the generalization ability of the model.\\
Also, we introduce a multi-stage hidden state fusion approach that generates the predictions by combining the original logits with those derived from the hidden states at each stage, leading to an enhanced representation power of hidden states.
After breaking down the runtime of the HSM-SSD layer, we present the macro design that minimizes memory-bound operations, prioritizing practical performance in real-world applications over theoretical metrics like FLOPs.
Through extensive experiments, we demonstrate that our EfficientViM achieves the new speed-accuracy state-of-the-art trade-off as shown in~\Cref{fig:intro}.
In particular, EfficientViM-M2 outperforms the previous SOTA model SHViT~\cite{yun2024shvit}, and pioneering work ModelNetV3~\cite{howard2019searching} with a 0.6\% performance improvement even bringing about 7\% and 80\% speed-ups, respectively. In summary, the contributions of EfficientViM are threefold:
\begin{itemize}
    \item We propose a novel mamba-based lightweight architecture called EfficientViM, leveraging the linear computational cost of the global token mixer. 
    \item We introduce HSM-SSD, which makes the major overhead of the SSD layer controllable by adjusting the number of hidden states.
    \item With a design minimizing memory-bound operations and incorporating multi-stage hidden state fusion, EfficientViM achieves the best speed-accuracy tradeoffs.
\end{itemize}

\section{Preliminaries}
\paragraph{State space models (SSM).} 
Inspired by the linear time-invariant (LTI) continuous system, an SSM maps an input sequence $x(t) \in \mathbb{R}$ to an output sequence $y(t) \in \mathbb{R}$ as
\begin{equation}
h^\prime(t) = \thickhat{\mathbf{A}}h(t)+ \thickhat{\mathbf{B}}x(t),\;\; y(t) = \thickhat{\mathbf{C}} h(t),
\end{equation}
where $h(t) \in \mathbb{R}^{N \times 1}$ is a hidden state, $\thickhat{\mathbf{A}} \in \mathbb{R}^{N\times N}$, $\thickhat{\mathbf{B}} \in \mathbb{R}^{N\times1}$, $\hat{\mathbf{C}} \in \mathbb{R}^{1\times N}$  are the projection matrix, and $N$ is the number of states.
To adapt this continuous-time system for discrete data in deep learning, given the multivariate input sequences $\mathbf{x} = \left[\mathbf{x}^\top_1, \ldots,\mathbf{x}^\top_L\right]^\top \in \mathbb{R}^{L \times D}$ with $\forall_t \mathbf{x}_t  \in \mathbb{R}^{1\times D}$, Mamba~\cite{mamba} first generates parameters as 
$\thickhat{\mathbf{B}}, \mathbf{C}=\mathbf{x}\mathbf{W}_\mathbf{B}, \mathbf{x}\mathbf{W}_\mathbf{C} \in \mathbb{R}^{L \times N}$, $\Delta = \mathbf{x}\mathbf{W}_\Delta \in \mathbb{R}^L$, where $\mathbf{W}_\mathbf{B}, \mathbf{W}_\mathbf{C} \in \mathbb{R}^{D\times N}$, $\mathbf{W}_\Delta \in \mathbb{R}^{D \times 1}$ are learnable matrices.
Then, the discretized form of SSM with zero-order hold discretization is defined as 
\begin{equation}
\mathbf{h}_{t} = \mathbf{A}_t\mathbf{h}_{t-1} +  \mathbf{B}^\top_t \mathbf{x}_{t},
\;\; \mathbf{y}_{t} = \mathbf{C}_t\mathbf{h}_{t},
\label{eq:SSM}
\end{equation} 
where $\mathbf{y} \in \mathbb{R}^{L \times D}$, $\mathbf{h}_t \in \mathbb{R}^{N \times D}$, $\mathbf{A}_t = e^{\Delta_t \thickhat{\mathbf{A}}} \in \mathbb{R}^{N\times N}$,
$\mathbf{B}_t = (\Delta_t \thickhat{\mathbf{A}})^{-1}(e^{\Delta_t \thickhat{\mathbf{A}}} - \mathbf{I})\cdot\Delta_t \thickhat{\mathbf{B}}_t \approx \Delta_t \thickhat{\mathbf{B}}_t \in \mathbb{R}^{1 \times N}$.
In this formulation, $\hat{\mathbf{A}} \in \mathbb{R}^{N\times N}$ is a learnable diagonal matrix, and all projection matrices $\mathbf{A}_t,\mathbf{B}_t, \mathbf{C}_t$ enable the linear time-variant discrete system that selectively attends to inputs $\mathbf{x}$ and hidden state $\mathbf{h}$ of each timestamp $t$.

\paragraph{State space duality (SSD).} 
Mamba2~\cite{mamba2} further simplifies the diagonal form of the evolution matrix $\hat{\mathbf{A}}$ into the scalar form as $\hat{a} \in \mathbb{R}$ resulting in $\mathbf{a} \in \mathbb{R}^{L}$ via the same discretization step.
Then, the state space duality (SSD) reformulates \Cref{eq:SSM} as matrix transformation:
\begin{equation}
\begin{aligned}
\mathbf{y} & = \text{SSD}(\mathbf{x}, \mathbf{a},\mathbf{B},\mathbf{C}) = \left(\mathbf{M} \odot (\mathbf{C}\mathbf{B}^\top) \right) \mathbf{x} ,\\
\mathbf{M}_{ij} &= 
\begin{cases}
\prod^i_{k=j+1} \mathbf{a}_k \; &\text{if}\; i>j\\
1  &\text{if}\; i=j\\
0  &\text{if}\; i<j
\end{cases},
\end{aligned}
\label{eq:SSD}
\end{equation}
where $\mathbf{M} \in \mathbb{R}^{L \times L}$, and  $\odot$ indicates Hadamard product.
Note that the lower triangular matrix $\mathbf{M}$ acts as a causal mask, which is suboptimal for image processing.
To address this, non-causal SSD (NC-SSD)~\cite{zhu2024vision, liu2024linfusion} has been studied as an alternative to SSD by defining the mask as $\mathbf{M}_{ij} = \prod^{L}_{k=j+1} \mathbf{a}_k$, resulting in $\mathbf{M}_{1j}=\mathbf{M}_{2j}=\ldots=\mathbf{M}_{Lj}$.
Further, in VSSD~\cite{zhu2024vision}, it is  simplified as $\mathbf{M}_{ij} = \mathbf{a}_j$ resulting in
\begin{equation}
\begin{aligned}
     \mathbf{y} &= \text{NC-SSD}(\mathbf{x}, \mathbf{a},\mathbf{B},\mathbf{C}) = \mathbf{C}\mathbf{h},\\
     \mathbf{h} &= (\mathbf{a}\mathbbm{1}_N^\top \odot \mathbf{B})^\top \mathbf{x} = {\sum}^{L}_{i=1} \mathbf{a}_i \mathbf{B}_i^\top \mathbf{x}_i,
\end{aligned}
\label{eq:NCSSD}
\end{equation}
where $\mathbbm{1}_N \in \mathbb{R}^{N}$ is one vector for broadcasting $\mathbf{a}$.
Since the cumulative multiplication of $\mathbf{a}$ restricts the receptive fields as discussed in~\cite{shi2024multi, shi2024vssd}, we adopt this version of NC-SSD as our starting point for an efficient token mixer.

\section{Method}
\label{sec:3}
We present a hidden state mixer-based SSD (\textbf{HSM-SSD}) for capturing global context with reduced costs, detailed in~\Cref{sec:3.1}.
Then, we discuss the additional techniques to improve both speed and performance with HSM-SSD in~\Cref{sec:3.2}. 
After that, we outline the overall architecture and block designs to construct \textbf{EfficientViM} in~\Cref{sec:3.3}.

\subsection{Hidden State Mixer-based SSD}
\label{sec:3.1}
\begin{table}[t!]
    \centering
    \setlength{\tabcolsep}{2pt}
    \begin{tabular}{cc}
    \toprule
    Complexity & \multicolumn{1}{c}{Method}  \\
    \midrule
    $\mathcal{O}(LD^2 + L^2D)$ & Attention~\cite{vaswani2017attention}    \\
    $\mathcal{O}(LD^2)$  & Linear Attention~\cite{katharopoulos2020transformers} \\
    $\mathcal{O}(LD^2 + LND)$  &  SSD~\cite{mamba2}, NC-SSD~\cite{shi2024vssd}    \\
    $\mathcal{O}(N D^2 + LND)$  & HSM-SSD  \\
    \bottomrule
    \end{tabular}
    \caption{\textbf{Complexity comparison of global tokens mixers}. $L$: \# tokens, $N$: \# states, $D$: \# channels.}
    \label{tab:complexity} 
    \vspace{-10pt}
\end{table}

We start with a brief discussion on the computational cost of the NC-SSD layer depicted in~\Cref{fig:NCSSD_layer}.
The entire process of the NC-SSD layer can be summarized as
\begin{align}
    &\thickhat{\mathbf{B}}, \mathbf{C}, \Delta, \mathbf{x}, \mathbf{z} = \text{Linear}(\mathbf{x}_\text{in}) \label{eq:linear}\\
    &\mathbf{a}, \mathbf{B} = \text{Discretization}(\thickhat{a}, \thickhat{\mathbf{B}}, \Delta),\label{eq:discret}\\
    &\mathbf{B}, \mathbf{C}, \mathbf{x} := \text{DWConv}(\mathbf{B}, \mathbf{C}, \mathbf{x}),\label{eq:DWconv}\\
    &\mathbf{y} = \text{NC-SSD}(\mathbf{\mathbf{x}}, \mathbf{a}, \mathbf{B}, \mathbf{C}),\\
    &\mathbf{x}_\text{out} = \text{Linear}(\mathbf{y} \odot \sigma(\mathbf{z})), \label{eq:gating}
\end{align}
where $\mathbf{x}_\text{in}, \mathbf{x}_\text{out}, \mathbf{x}, \mathbf{z} \in \mathbb{R}^{L \times D}$, and $\sigma$ is the activation function.
The computational costs of \Cref{eq:linear,eq:discret,eq:DWconv} with a constant kernel size is $\mathcal{O}(LD^2 + LND)$.
Subsequently, executing NC-SSD and the output projection requires $\mathcal{O}(LND)$ and $\mathcal{O}(LD^2)$, respectively.
Given that the number of states $N$ is typically much smaller than the number of channels $D$ (i.e., $N \ll D$), the overall complexity is mainly driven by the linear projections involved in generating $\mathbf{x}$, $\mathbf{z}$, and $\mathbf{x}_\text{out}$, leading to $\mathcal{O}(LD^2)$.
Therefore, optimizing the linear projection in SSD blocks is crucial for scalability.\\
\noindent 
We delve into optimizing these computations for efficient layer designs.
NC-SSD (\Cref{eq:NCSSD}) can be factorized into two steps.
First, it obtains shared global hidden state $\mathbf{h} \in \mathbb{R}^{N \times D}$ through a weighted linear combination of input states $\mathbf{B}^\top_i\mathbf{x}_i \in \mathbb{R}^{N \times D}$ using importance weights $\mathbf{a} \in \mathbb{R}^{L}$.
Second, the outputs for each input are generated by projecting a hidden state with their corresponding $\mathbf{C} \in \mathbb{R}^{L \times N}$.
Here, if we denote the projected input $\mathbf{x}$ as $\mathbf{x}_\text{in}\mathbf{W}_\text{in}$ removing DWConv, the following holds:
\begin{equation}
\begin{aligned}
     \mathbf{h} &= (\mathbf{a} \mathbbm{1}^\top_N\odot \mathbf{B})^\top (\mathbf{x}_\text{in}\mathbf{W}_\text{in}) \\
     &=((\mathbf{a} \mathbbm{1}^\top_N \odot \mathbf{B})^\top \mathbf{x}_\text{in})\mathbf{W}_\text{in} = \mathbf{h}_\text{in} \mathbf{W}_\text{in},
     \label{eq:hidden}
\end{aligned}
\end{equation}
where $\mathbf{W}_\text{in} \in \mathbb{R}^{D \times D}$, and $\mathbf{h}_\text{in} = (\mathbf{a} \mathbbm{1}^\top_N \odot \mathbf{B})^\top \mathbf{x}_\text{in}  \in \mathbb{R}^{N\times D}$.
By computing $\mathbf{h}_{\text{in}}$ first, we perform a linear projection onto the hidden state space.
This approach reduces the cost from $\mathcal{O}(L D^2)$ to $\mathcal{O}(N D^2)$ which relies on the number of states.
In other words, we could alleviate the major costs of the layer by adjusting the number of states such that $N \ll L$.

\vspace{5pt}
\noindent\textbf{Hidden State Mixer.}
The next step is to alleviate the cost of gating and output projection in~\Cref{eq:gating}, which still remains $\mathcal{O}({LD^2})$.
To address this, we focus on a shared global hidden state $\mathbf{h}$.
Note that the hidden state $\mathbf{h}$ itself is the reduced latent array that compresses the input data with a significantly smaller sequence length $N$. 
Based on this observation, we propose a \textit{Hidden State Mixer} (\textbf{HSM}) that performs channel mixing, including the gating and output projection, directly on the reduced latent array $\mathbf{h}$ as highlighted in~\Cref{fig:HSMSSD_layer}.
To this end, we approximate the output of the NC-SSD layer as follows:
\begin{figure}[t!]
     \centering
    \begin{subfigure}{0.58\columnwidth}
        \centering
        \includegraphics[height=6.7cm]{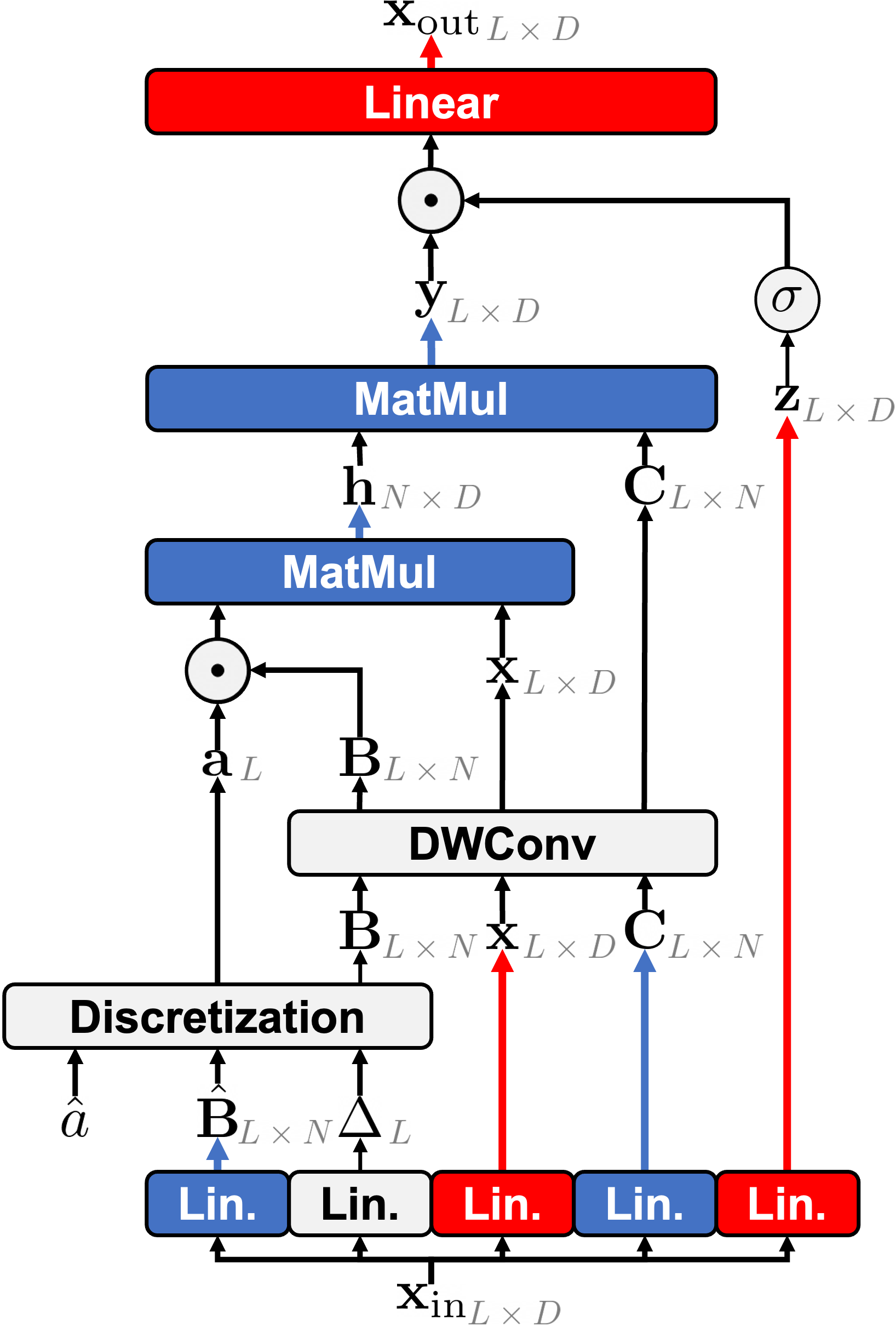}
        \caption{NC-SSD layer}
        \label{fig:NCSSD_layer}
    \end{subfigure}
    \hfill
    \begin{subfigure}{0.40\columnwidth}
        \centering
        \includegraphics[height=6.7cm]{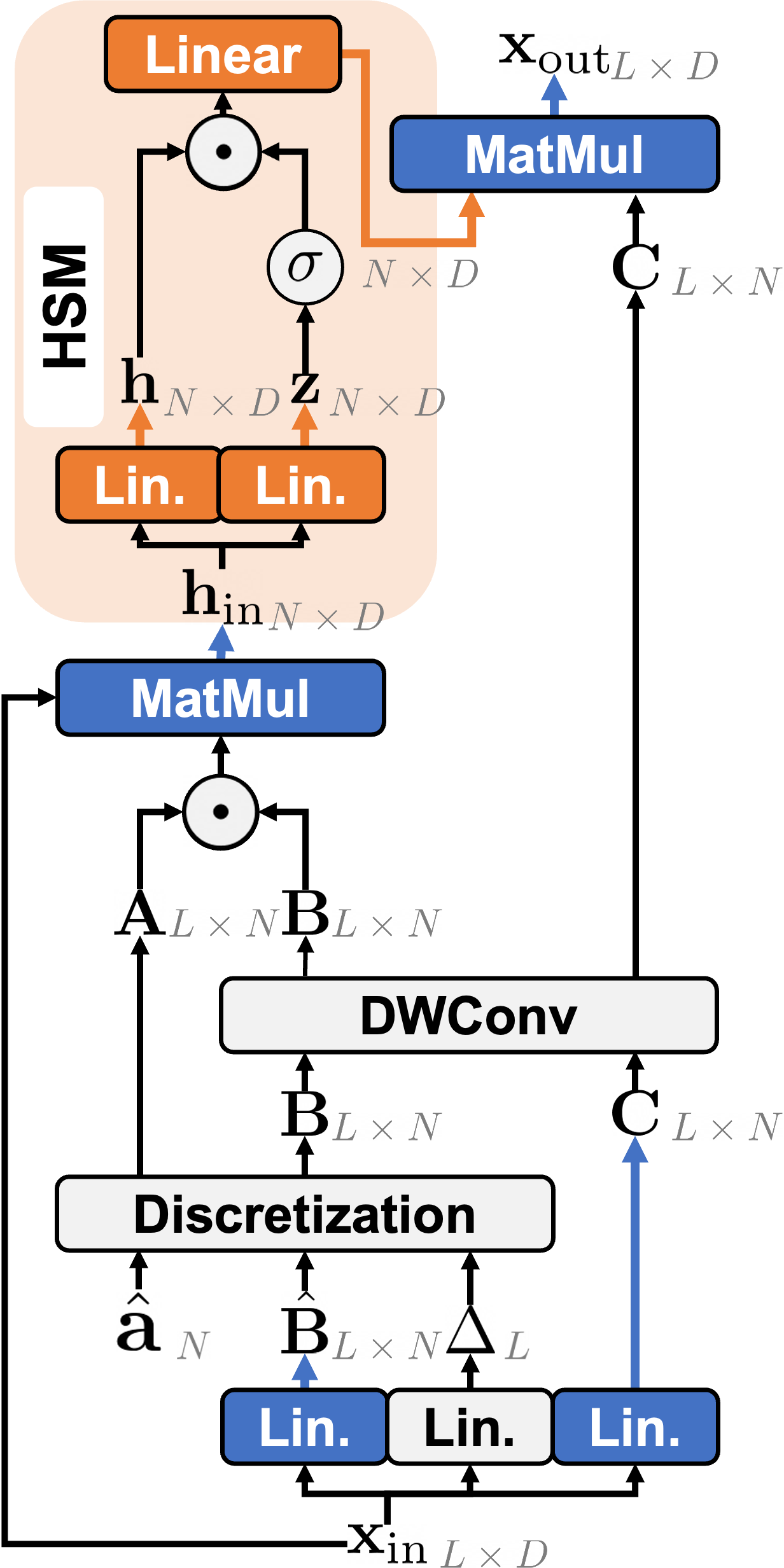}
        \caption{HSM-SSD layer}
        \label{fig:HSMSSD_layer}
    \end{subfigure}
    \vspace{-15pt}
    \definecolor{blue}{RGB}{69, 114, 196}
    \definecolor{orange}{RGB}{237, 125, 49}
    \caption{\textbf{Illustration of (left) NC-SSD and (right) HSM-SSD layer.} In the HSM-SSD layer, the computationally heavy projections are handled with the reduced hidden state in HSM as highlighted. \red{Red}, \textcolor{blue}{blue}, and \textcolor{orange}{orange} colors indicate the operation requiring the complexities of \red{$\mathcal{O}(LD^2)$}, \textcolor{blue}{$\mathcal{O}(LND)$}, and \textcolor{orange}{$\mathcal{O}(ND^2)$}.}
    \label{fig:layer}
    \vspace{-15pt}
\end{figure}
\begin{equation}
\begin{aligned}
    \mathbf{x}_\text{out} &= f(\mathbf{y})\\
    & = \text{Linear}(\mathbf{y} \odot \sigma(\mathbf{z}))\\
    &=(\mathbf{C}\mathbf{h}
    \odot \sigma{(\mathbf{x}_\text{in} \mathbf{W}_\mathbf{z}}))\mathbf{W}_\text{out}\\
    &\approx \mathbf{C}\left((\mathbf{h}
    \odot \sigma{(\mathbf{h}_\text{in} \mathbf{W}_\mathbf{z}}))\mathbf{W}_\text{out}\right) = \mathbf{C}f(\mathbf{h}),
\end{aligned}
\label{eq:HSM}
\end{equation}
\noindent where $\mathbf{y} =\mathbf{Ch}$ from~\Cref{eq:NCSSD}, and $f$ indicates channel mixing of gating function followed by linear projection with the learnable matrix $\mathbf{W}_\mathbf{z}, \mathbf{W}_\text{out} \in \mathbb{R}^{D \times D}$.
Contrary to the original NC-SSD layer where $\mathbf{C}\mathbf{h}$ is computed first and then fed to $f$, we apply the gating and projection directly to the hidden states with HSM.
Then, the final output $\mathbf{x}_\text{out}$ is generated by projecting updated hidden states with $\mathbf{C}$.
Consequently, the total complexity of capturing global context in the HSM-SSD layer becomes $\mathcal{O}({ND^2 + LND})$, which is negligible as $N$ gets smaller.
Refer to~\Cref{tab:complexity} for a comparison of the big-$\mathcal{O}$ complexities with previous global token mixers.
\begin{proposition}
Let $N=L$, $\mathbf{a} \mathbbm{1}^\top_L \odot \mathbf{B} = \mathbbm{I}_L$, and  $\mathbf{C} \in \mathbb{R}^{L \times L}$ be diagonal. Then, $\text{HSM-SSD}(\mathbf{x}, \mathbf{a}, \mathbf{B}, \mathbf{C})$ is equivalent to $\text{NC-SSD}(\mathbf{x}, \mathbf{a}, \mathbf{B}, \mathbf{C})$ including gating and output projection, as $\mathbf{x}_\text{out} = f(\mathbf{y}) = \mathbf{C} f(\mathbf{h})$. See supplement for proof.
\end{proposition}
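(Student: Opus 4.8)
The plan is to observe that the two quantities being compared differ in exactly two places, and that each hypothesis neutralizes one of them. The full NC-SSD output is $f(\mathbf{y}) = (\mathbf{C}\mathbf{h} \odot \sigma(\mathbf{x}_\text{in}\mathbf{W}_\mathbf{z}))\mathbf{W}_\text{out}$, whereas the HSM-SSD output is $\mathbf{C}f(\mathbf{h}) = \mathbf{C}\big((\mathbf{h} \odot \sigma(\mathbf{h}_\text{in}\mathbf{W}_\mathbf{z}))\mathbf{W}_\text{out}\big)$. These expressions disagree only in (i) the argument of the gating nonlinearity ($\mathbf{x}_\text{in}$ versus $\mathbf{h}_\text{in}$) and (ii) the position of $\mathbf{C}$ relative to the channel-mixing map $f$. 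I would show that the hypotheses force both discrepancies to vanish, so the approximation $\approx$ in \Cref{eq:HSM} becomes an exact equality.

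First I would substitute the hypotheses into the definition of the hidden state. Since $N = L$ and $\mathbf{a}\mathbbm{1}^\top_L \odot \mathbf{B} = \mathbbm{I}_L$, the mixing matrix $(\mathbf{a}\mathbbm{1}^\top_N \odot \mathbf{B})^\top$ equals $\mathbbm{I}_L$, so that $\mathbf{h}_\text{in} = \mathbbm{I}_L\,\mathbf{x}_\text{in} = \mathbf{x}_\text{in}$ and likewise $\mathbf{h} = \mathbf{x}$. Consequently $\sigma(\mathbf{h}_\text{in}\mathbf{W}_\mathbf{z}) = \sigma(\mathbf{x}_\text{in}\mathbf{W}_\mathbf{z})$, i.e. the gating factor is identical in both expressions. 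Denote this common factor by $\mathbf{g}$; this disposes of discrepancy (i).

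It then remains to verify $(\mathbf{C}\mathbf{h} \odot \mathbf{g})\mathbf{W}_\text{out} = \mathbf{C}\big((\mathbf{h} \odot \mathbf{g})\mathbf{W}_\text{out}\big)$, and here I would argue row by row. Writing $\mathbf{C} = \mathrm{diag}(c_1,\ldots,c_L)$, left-multiplication by $\mathbf{C}$ simply rescales row $i$ by the scalar $c_i$. Because the Hadamard product is entrywise within each row and right-multiplication by $\mathbf{W}_\text{out}$ acts linearly on rows, the scalar $c_i$ factors cleanly out of both operations: row $i$ of each side equals $c_i(\mathbf{h}_i \odot \mathbf{g}_i)\mathbf{W}_\text{out}$, where $\mathbf{h}_i,\mathbf{g}_i$ denote the $i$-th rows. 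Matching all rows yields the claimed identity $\mathbf{x}_\text{out} = f(\mathbf{y}) = \mathbf{C}f(\mathbf{h})$.

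The main obstacle — and the step demanding the most care — is precisely this commutation of $\mathbf{C}$ through the gating and the output projection. It hinges entirely on $\mathbf{C}$ being diagonal, i.e. acting as a per-token scalar: for a general non-diagonal $\mathbf{C}$, left-multiplication mixes rows and no longer commutes with the row-wise Hadamard product, so $(\mathbf{C}\mathbf{h} \odot \mathbf{g})\mathbf{W}_\text{out} \neq \mathbf{C}(\mathbf{h} \odot \mathbf{g})\mathbf{W}_\text{out}$ in general. This makes transparent why the two hypotheses ($\mathbf{h}_\text{in} = \mathbf{x}_\text{in}$ and diagonal $\mathbf{C}$) are exactly what is required to turn the HSM-SSD approximation into an equality, and correspondingly why HSM-SSD is only an approximation once these conditions fail.
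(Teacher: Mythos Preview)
Your proposal is correct and follows essentially the same approach as the paper's proof: first use $\mathbf{a}\mathbbm{1}^\top_L \odot \mathbf{B} = \mathbbm{I}_L$ to identify $\mathbf{h}_\text{in}$ with $\mathbf{x}_\text{in}$ in the gating argument, then use the diagonality of $\mathbf{C}$ to commute it past the Hadamard product (and trivially past $\mathbf{W}_\text{out}$ by associativity). Your row-by-row justification for the commutation step is in fact more explicit than the paper's, which simply writes the final equality without elaborating on why diagonality is needed.
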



\begin{figure}[t]
     \centering
     \includegraphics[width=\columnwidth]{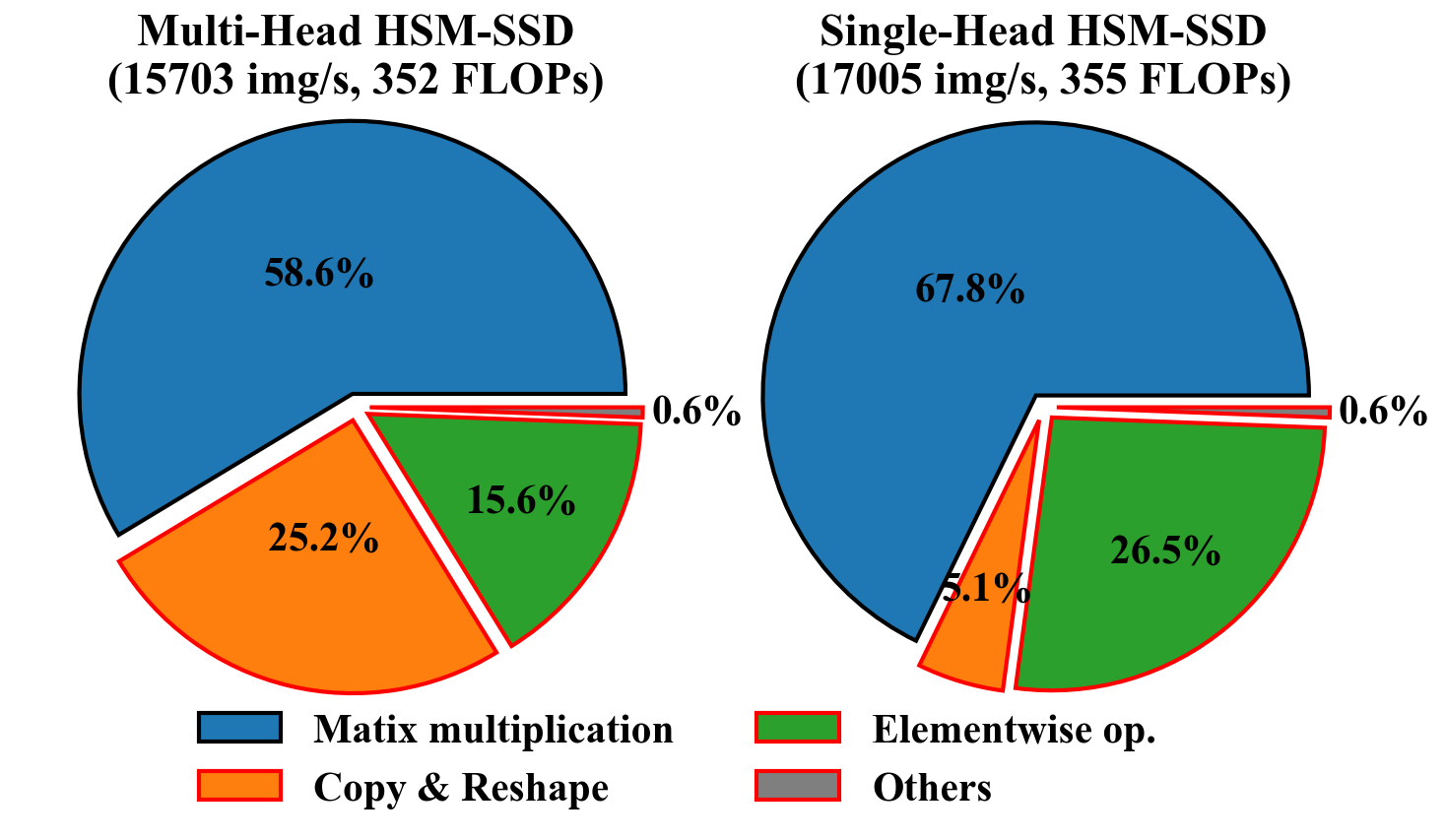}
     \vspace{-15pt}
    \caption{\textbf{Runtime breakdown of HSM-SSD with EfficientViM-M2.} The operations highlighted in red are memory-bound.}
    \label{fig:break}
    \vspace{-10pt}
\end{figure}

\begin{remark}
Although we utilize gating and linear projection for the HSM to emulate the operation in the original SSD layer, other methods are also available. 
For example, inspired by the Perceiver architecture~\cite{jaegle2021perceiver}, we could implement a global token mixer for the function $f$ in Eq.~\eqref{eq:HSM} to efficiently handle high-dimensional inputs using a reduced set of latent variables. 
Moreover, HSM-SSD can recursively apply HSM-SSD itself as the hidden state mixer, further reducing computational complexity and forming a recurrent architecture unrolled across depth.
\end{remark}

\subsection{HSM-SSD layer}
\label{sec:3.2}

\noindent\textbf{Multi-stage hidden state fusion.}
To further improve the performance of EfficientViM, we introduce a \textit{multi-stage hidden-state fusion} (\textbf{MSF}) mechanism that fuses the prediction logits leveraging hidden states from multiple stages of the network.
Let $\{\mathbf{h}^{(s)}\}_{s=1}^S$ denote the hidden states at the last block of each stage $s$, where $S$ is the total number of stages.
For each $\mathbf{h}^{(s)}$, we compute a global representation $\thickhat{\mathbf{h}}^{(s)}$ through a simple average over the hidden states:
\begin{equation} 
\thickhat{\mathbf{h}}^{(s)} = \frac{1}{N} {\sum}_{i=1}^{N} \mathbf{h}^{(s)}_i.
\end{equation}
Then, each global representation $\thickhat{\mathbf{h}}^{(s)} \in \mathbb{R}^{D}$ is normalized and projected to generate its corresponding logits $\mathbf{z}^{(s)} \in \mathbb{R}^{c}$, where $c$ indicates the number of classes.
We set the final logit $\mathbf{z}$ of EfficientViM as a weighted sum of the logits from all stages, including the original logit $\mathbf{z}^{(0)}$ obtained from the output of the last stage, which is defined as\\
\begin{figure*}[t!]
    \centering
     \includegraphics[width=\textwidth]{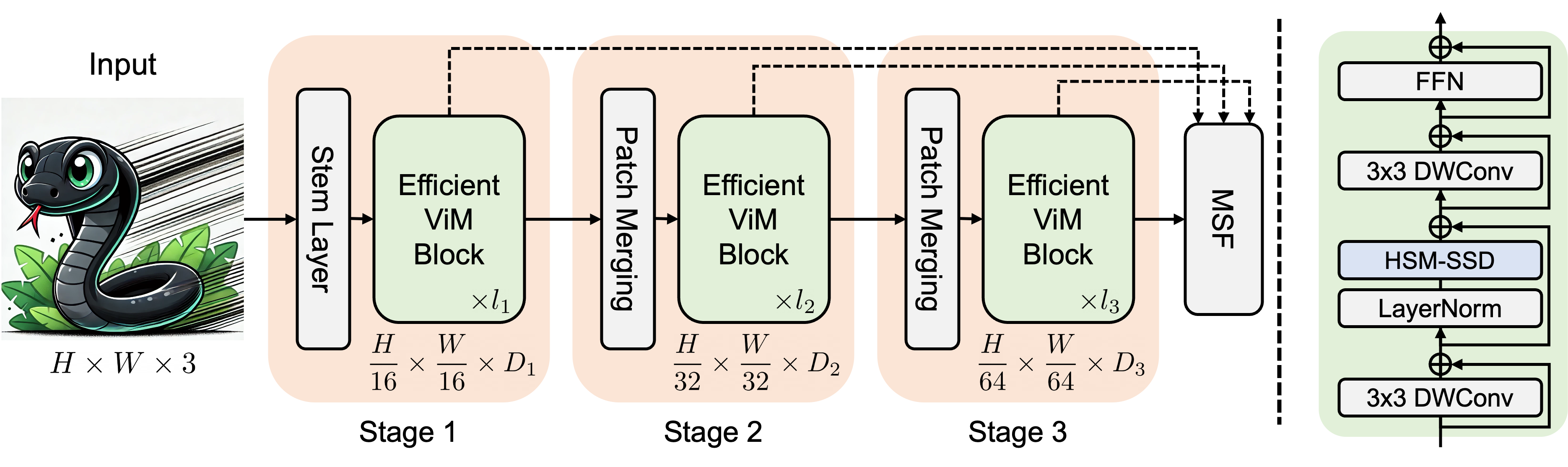}
     \vspace{-15pt}
    \caption{\textbf{(left) Overall architecture and (right) block design of EfficientViM.} The dotted line indicates a skip connection for multi-stage hidden state fusion (MSF). Illustration of the HSM-SSD layer in the EfficientViM block is presented in~\Cref{fig:layer}.}
    \label{fig:arch}
    \vspace{-10pt}
\end{figure*}
\begin{equation} 
\begin{aligned}
\mathbf{z} &= \sum^S_{s=0} \thickhat{\beta}^{(s)} \mathbf{z}^{(s)},\\
\thickhat{\beta}^{(s)} &= \frac{\exp(\beta^{(s)})}{\sum^S_{i=0} \exp(\beta^{(i)})},
\end{aligned}
\end{equation}
where $\beta^{(s)}$ is learnable scalar.
By training with this combined logit, we explicitly reinforce the representational power of the hidden states, as they contribute to the final predictions.
It also enriches the information by integrating both low-level and high-level features, thereby enhancing the generalization ability of the model during inference.
\begin{algorithm}[t]
\caption{HSM-SSD Layer}
\label{algo:HSM-SSD}
\textbf{Input:} $\mathbf{x}_\text{in} \in \mathbb{R}^{L \times D}$\\
\textbf{Output:} $\mathbf{x}_\text{out} \in \mathbb{R}^{L \times D}$
\begin{algorithmic}[1]
\State $\thickhat{\mathbf{B}}, \mathbf{C}, \Delta \leftarrow \text{Linear}(\mathbf{x}_\text{in})$ \Comment{$\mathcal{O}(LND)$}
\State $\thickhat{\mathbf{B}}, \mathbf{C} \leftarrow \text{DWConv}(\thickhat{\mathbf{B}},\mathbf{C})$ \Comment{$\mathcal{O}(LNK^2D)$}
\State $\mathbf{A}, \mathbf{B} \leftarrow \text{Discretization}(\thickhat{\mathbf{a}}, \thickhat{\mathbf{B}}, \Delta)$ \Comment{$\mathcal{O}(LD)$}
\State $\mathbf{h}_\text{in} \leftarrow (\mathbf{A} \odot \mathbf{B})^\top \mathbf{x}_\text{in}$ \Comment{$\mathcal{O}(LND)$}
\State $\mathbf{h}, \mathbf{z} \leftarrow \text{Linear}(\mathbf{h}_\text{in})$ \Comment{$\mathcal{O}(ND^2)$}
\State $\mathbf{h} \leftarrow \text{Linear}(\mathbf{h} \odot \sigma(\mathbf{z}))$ \Comment{$\mathcal{O}(ND^2)$}
\State $\mathbf{x}_\text{out} \leftarrow \mathbf{C}\mathbf{h}$ \Comment{$\mathcal{O}(LND)$}
\State \textbf{Return} $\mathbf{x}_\text{out}$ 
\end{algorithmic}
\end{algorithm}

\vspace{5pt}
\noindent\textbf{Single-head HSM-SSD.}
The multi-head design in the attention mechanism~\cite{vaswani2017attention} allows it to selectively attend to the features from independent representation subspaces within each head.
SSD-based models~\cite{mamba2,shi2024vssd} generally adopt a multi-head variant called multi-input SSD, where the input $\mathbf{x}$ and $\mathbf{a}$ are defined for each head while $\mathbf{B}$, and $\mathbf{C}$ are shared across the head.
However, recent work~\cite{yun2024shvit} has pointed out that a significant portion of real runtime in multi-head self-attention is driven by memory-bound operations. 

\noindent In our preliminary experiments, we also found that multi-head configuration has become a bottleneck for HSM-SSD as summarized in~\Cref{fig:break}.
As shown in the figure, the real runtime of multi-head HSM-SSD is largely bounded by memory access, requiring almost a quarter of the total runtime.
Hence, we eliminate all tensor manipulation caused by multi-head (\eg, reshape, copy operation).
Concurrently, to mimic the capability of multi-head in capturing diverse relationships,
we set $\Delta \in \mathbb{R}^{L \times N}, \thickhat{\mathbf{a}} \in \mathbb{R}^{N}$, enabling the importance weights $\mathbf{A} \in \mathbb{R}^{L \times N}$ to estimate the importance of the tokens per state.
Then, the input for the hidden state mixer becomes
\begin{equation}
     \mathbf{h}_\text{in} = (\mathbf{A} \odot \mathbf{B})^\top  \mathbf{x}_\text{in}.
     \label{eq:final}
\end{equation}
As a result, our single-head HSM-SSD with state-wise importance weights achieved higher throughput (17,005 img/s) with single-head compared to multi-head (15,703 img/s) along with competitive performance (see~\Cref{tab:abl}.b) under similar FLOPs.
The pseudocode of the single-head HSM-SSD layer is provided in~\Cref{algo:HSM-SSD}.

\subsection{EfficientViM}
In this subsection, we present EfficientViM, an efficient vision mambas built upon the HSM-SSD layer. The overall architecture is illustrated in~\Cref{fig:arch}.

\label{sec:3.3}
\noindent\textbf{Block design.}
In each block of EfficientViM, we sequentially stack the HSM-SSD layer and a feed-forward network (FFN) to facilitate global information aggregation and channel interaction, respectively.
FFN layer consists of two consecutive 1$\times 1$ convolution layers, known as pointwise convolution, with an expansion ratio of 4.
To capture the local context with minimal computational costs, we incorporate a 3$\times$3 depthwise convolution (DWConv) layer before both the NC-SSD and FFN layers.
Each layer is combined with a residual connection using a layer scale 
following \cite{touvron2022deit,touvron2021going}.
For normalization, we apply layer normalization (LN) only before the HSM-SSD layer for numerical stability, while batch normalization (BN) is used for DWConv and FFN considering its faster speed over LN.

\noindent\textbf{Overall architecture.}
The stem layer first maps the ${H\times W \times 3}$ input image to the down-sized feature map ${\frac{H}{16} \times \frac{W}{16} \times D_1}$ through the four consecutive 3$\times$3 convolutional layers with the stride of 2.
Then, the resulting feature map is fed into the three stages built with EfficientViM blocks.
To achieve hierarchical architecture and improve efficiency, we downscale the feature map while increasing the number of channels at the end of each stage via the downsampling layer adopted from~\cite{zhu2024vision,liu2023efficientvit,sandler2018mobilenetv2}.
Regarding activation functions, we use SiLU only in the HSM-SSD layer, and others use ReLU~\cite{nair2010rectified} since the latencies of the complex activation functions (\eg, Gelu~\cite{hendrycks2016gaussian}, DynamicReLU~\cite{chen2020dynamic}, etc.) are largely dependent on devices, as discussed in previous works~\cite{vasu2023mobileone,liu2023efficientvit,yun2024shvit}.
Detailed architecture specifications of EfficientViM variants are available in~\Cref{tab:arch}.

\begin{table}[t!]
    \centering
    \footnotesize
    \setlength{\tabcolsep}{5pt}
    \begin{tabular}{c|ccc}
    \toprule
    Model & \# Blocks & \# Channels & \# States  \\
    \midrule
    EfficientViM-M1 & $[2,2,2]$ & $[128,192,320]$ & $[49,25,9]$  \\
    EfficientViM-M2 & $[2,2,2]$ & $[128,256,512]$ & $[49,25,9]$  \\
    EfficientViM-M3 & $[2,2,2]$ & $[224,320,512]$ & $[49,25,9]$\\
    EfficientViM-M4 & $[3,4,2]$ & $[224,320,512]$ & $[64,32,16]$\\
    \bottomrule
    \end{tabular}
    \caption{\textbf{Specification of EfficientViM variants.}}
    \label{tab:arch} 
\vspace{-10pt}
\end{table}

\section{Experiments}
\label{sec:4}
\begin{table*}[t!]
    \centering
    \small
    \footnotesize
    \setlength{\tabcolsep}{5pt}
    \begin{tabular}{l|c|c|c|c|rr|r|c|cc}
    \toprule
    \multicolumn{1}{c|}{\multirow{2}{*}{Method}}& \multicolumn{1}{c|}{\multirow{2}{*}{Venue}}& Input &\multicolumn{1}{c|}{\multirow{2}{*}{Epochs}} &\multicolumn{1}{c|}{Token} &  \multicolumn{2}{c|}{\textbf{Throughput}} & \multicolumn{1}{c|}{\textbf{Latency}} & \multicolumn{1}{c|}{\textbf{Top-1}}  &  \multicolumn{1}{c}{Params}   & \multicolumn{1}{c}{FLOPs}    \\
    \multicolumn{1}{c|}{}&   & Size & \multicolumn{1}{c|}{} & \multicolumn{1}{c|}{Mixer} & \multicolumn{1}{c}{\textbf{(im/s)} $\uparrow$} & \multicolumn{1}{c|}{\textbf{Thr}$_\text{rel}$ $\uparrow$}   & \multicolumn{1}{c|}{\textbf{(ms)} $\downarrow$}  & \multicolumn{1}{c|}{(\%) $\uparrow$}  & \multicolumn{1}{c}{(M)} & \multicolumn{1}{c}{(M)}\\ 
    \midrule
    MobileViTV2 0.5~\cite{mehta2022separable} & Arxiv 2022 & $256^2$ & 300  &  Att.   & 6,702 & $\times$0.32 & 0.149 & 70.2 & 1.4 & 466 \\
    MobileOne-S0~\cite{vasu2023mobileone}  & CVPR 2023& $224^2$ & 300 & Conv & 13,313 &$\times$0.64 & 0.075 & 71.4 &2.1 & 275 \\
    EMO-1M~\cite{zhang2023rethinking}& ICCV 2023 & $224^2$ & 300   &  Att. & 6,945 & $\times$0.34 & 0.144  & 71.5& 1.3   & 261  \\
    MobileFormer-96M~\cite{chen2022mobile} & CVPR 2022 & $224^2$ & 450 &  Att. & 11,554 & $\times$0.56  & 0.087  & 72.8& 4.6 & 96 \\
    SHViT-S1~\cite{yun2024shvit} & CVPR 2024 & $224^2$ & 300 &  Att. & 19,868 & $\times$0.96 & 0.050  & 72.8 &   6.3 & 241 \\
    \rowcolor{lightgray} \textbf{EfficientViM-M1} &  - & $224^2$ & 300  & SSD & \textbf{20,731} & $\times$\textbf{1.00}& \textbf{0.048}  & 72.9 &    6.7 & 239 \\
    MobileNetV3-L 0.75~\cite{howard2019searching} & ICCV 2019 & $224^2$ & 600 & Conv & 10,846 & $\times$0.52 &  0.092 & 73.3 & 4.0 & 155 \\
    EfficientViT-M3~\cite{liu2023efficientvit} & CVPR 2023 & $224^2$ & 300 &  Att. & 16,045 & $\times$0.77& 0.062   & 73.4 & 6.9 & 263\\ 
    \rowcolor{lightgray} \textbf{EfficientViM-M1} & - & $224^2$ & 450  & SSD & \textbf{20,731} & $\times$\textbf{1.00} & \textbf{0.048}  & \textbf{73.5} & 6.7 & 239 \\
    \midrule

    EfficientFormerV2-S0~\cite{li2023rethinking} &NeurIPS 2022 & $224^2$ & 300  &  Att.& 1,350 & $\times$0.08 &0.741  & 73.7 &3.5& 407\\
    EfficientViT-M4~\cite{liu2023efficientvit}  & CVPR 2023 & $224^2$ & 300 &  Att. & 15,807 & $\times$0.93  & 0.063  & 74.3 & 8.8 & 299\\
    EdgeViT-XXS~\cite{pan2022edgevits} & ECCV 2022 & $224^2$ & 300 &  Att.   & 5,990 & $\times$0.35 & 0.167  & 74.4 & 4.1 & 556 \\
    EMO-2M~\cite{zhang2023rethinking} & ICCV 2023 & $224^2$ & 300 &  Att.   & 4,990 & $\times$0.29  & 0.200  & 75.1 & 2.3 & 439 \\
    MobileNetV3-L 1.0~\cite{howard2019searching} & ICCV 2019 & $224^2$ & 600 & Conv & 9,493 & $\times$0.56 &  0.105 & 75.2  & 5.4 & 217  \\
    MobileFormer-151M~\cite{chen2022mobile} & CVPR 2022 & $224^2$ & 450 &  Att.   & 8,890 & $\times$0.52 & 0.112  & 75.2 & 7.6 & 151\\
    SHViT-S2~\cite{yun2024shvit} & CVPR 2024 & $224^2$ & 300  &  Att.   & 15,899 & $\times$0.93 & 0.063  & 75.2& 11.4 & 366  \\
    \rowcolor{lightgray} \textbf{EfficientViM-M2}  &-& $224^2$ & 300 & SSD   & \textbf{17,005} & $\times$\textbf{1.00} & \textbf{0.059}  & 75.4 & 13.9 & 355\\
    MobileViTV2 0.75~\cite{mehta2022separable}& Arxiv 2022 & $256^2$ & 300  &  Att.   &4,409 & $\times$0.26 & 0.227  & 75.6 & 2.9 & 1030\\ 
    FastViT-T8~\cite{vasu2023fastvit}  & ICCV 2023  & $256^2$ & 300  &  Att.   & 4,365 & $\times$0.26 & 0.229  &75.6 &3.6&705 \\
    \rowcolor{lightgray} \textbf{EfficientViM-M2}  &-& $224^2$ & 450 & SSD   & \textbf{17,005} & $\times$\textbf{1.00} & \textbf{0.059}  & \textbf{75.8}  & 13.9 & 355\\
    \midrule
    EfficientMod-XXS~\cite{ma2024efficient} & ICLR 2024 &$224^2$ & 300 &   Att.   & 7022 & $\times$0.59 & 0.142  & 76.0 &  4.7 & 583\\
    ConvNeXtV2-A~\cite{woo2023convnext} & CVPR 2023 & $224^2$ & 300 &  Conv & 7563 & $\times$0.63 & 0.132 & 76.2 & 3.7  &  552 \\
    EfficientViT-M5~\cite{liu2023efficientvit} & CVPR 2023  & $224^2$ & 300 &  Att.   & 11,105 & $\times$0.93  & 0.090 & 77.1 & 12.4 & 522 \\
    MobileOne-S2~\cite{vasu2023mobileone}  & CVPR 2023 & $224^2$ & 300 & Conv & 5,360  & $\times$0.45 & 0.187   & 77.4 & 7.8 &1299 \\
    SHViT-S3~\cite{yun2024shvit}  & CVPR 2024 & $224^2$ & 300  &  Att.   & 11,873 & $\times$0.99 & 0.084 & 77.4 &   14.2 & 601 \\
    EdgeViT-XS~\cite{pan2022edgevits} & ECCV 2022 & $224^2$ & 300  &  Att.   & 4,405 & $\times$0.37 & 0.227 & 77.5  & 6.7 & 1136 \\
    \rowcolor{lightgray} \textbf{EfficientViM-M3}&- & $224^2$& 300  & SSD   & \textbf{11,952} & $\times$\textbf{1.00} & \textbf{0.084}  & 77.6 & 16.6 & 656\\
    MobileFormer-294M~\cite{chen2022mobile} & CVPR 2022 & $224^2$ & 450 &  Att.   & 6,576 & $\times$0.55 & 0.152  & 77.9 &11.4 & 294\\
    EfficientFormerV2-S1~\cite{li2023rethinking} & NeurIPS 2022 & $224^2$  & 300 &  Att.  & 1,248 & $\times$0.10 & 0.801  & 77.9&  6.1 & 668 \\
    \rowcolor{lightgray} \textbf{EfficientViM-M3} &- & $224^2$ & 450 & SSD   & \textbf{11,952} & $\times$\textbf{1.00} & \textbf{0.084} & \textbf{77.9} & 16.6 & 656  \\
    \midrule
    ConvNeXtV2-F~\cite{woo2023convnext} & CVPR 2023 & $224^2$ & 300 &  Conv & 6,405 & $\times$0.78 & 0.156 & 78.0 &  5.2 & 785  \\
    MobileViTV2 1.0~\cite{mehta2022separable}& Arxiv 2022 & $256^2$  & 300 &  Att.    & 2,977 & $\times$0.36 & 0.336 & 78.1 &  4.9 & 1844 \\ 
    MobileOne-S3~\cite{vasu2023mobileone}  & CVPR 2023 & $224^2$ & 300& Conv & 4,181 & $\times$0.51 & 0.239 & 78.1 &10.1 &1896  \\
    EfficientMod-XS~\cite{ma2024efficient}  & ICLR 2024 &$224^2$ & 300&   Att.   & 5,321 & $\times$0.65 &  0.188 & 78.3 & 6.6 & 778 \\
    EMO-6M~\cite{zhang2023rethinking} & ICCV 2023 & $224^2$ & 300 &  Att.   & 3,266 & $\times$0.40 & 0.306 & 79.0 & 6.1 & 961  \\
    FastViT-T12~\cite{vasu2023fastvit}  & ICCV 2023  & $256^2$ & 300&  Att.   & 2,741 & $\times$0.34  & 0.365 &79.1  &6.8 &1419 \\
    MobileFormer-508M~\cite{chen2022mobile} & CVPR 2022 & $224^2$ & 450 &  Att.   & 4,586 & $\times$0.56 & 0.218 & 79.3  & 14.0 & 508  \\
    MobileOne-S4~\cite{vasu2023mobileone}  & CVPR 2023 & $224^2$ & 300 & Conv & 3,041 & $\times$0.37 & 0.329 & 79.4 &  14.8 &2978  \\
    SHViT-S4~\cite{yun2024shvit}  & CVPR 2024 & $256^2$ & 300 &  Att.   & 8,024 & $\times$0.98 & 0.124 & 79.4 &16.5 & 986  \\
    \rowcolor{lightgray} \textbf{EfficientViM-M4}&-  & $256^2$ & 300 & SSD   & \textbf{8,170} & $\times$\textbf{1.00} & \textbf{0.122}  & 79.4 & 19.6 & 1111 \\
    MobileViTV2 1.25~\cite{mehta2022separable}& Arxiv 2022 & $256^2$  & 300 &  Att.   & 2,409 & $\times$0.24 & 0.415 & 79.6 & 7.5 & 2857 \\ 
    \rowcolor{lightgray}\textbf{EfficientViM-M4}&- & $256^2$ & 450 & SSD   & \textbf{8,170} & $\times$\textbf{1.00} &\textbf{0.122}  & \textbf{79.6}  & 19.6 & 1111 \\

    \bottomrule
    \end{tabular}
    \caption{\textbf{Comparison of efficient networks on ImageNet-1K~\cite{deng2009imagenet} classification.} Results are sorted by accuracy. We also denote the relative throughput \textbf{Thr}$_\text{rel}$ of each method compared to EfficientViM in each split.}
    \label{tab:main} 
\vspace{-15pt}
\end{table*}

In this section, we first demonstrate the effectiveness of EfficientViM on image classification (\Cref{sec:4.1}).
Then, we conduct experiments to analyze the extensibility of EfficientViM on dense predictions (\Cref{sec:4.2}). 
We also provide ablation studies and analysis of EfficientViM (\Cref{sec:4.3}). 
See the supplement for implementation details and more experiments.
\subsection{Image Classification.}
\label{sec:4.1}
\noindent\textbf{Comparison with efficient vision backbones.}
For comparison of EfficientViM with prior works, we conduct ImageNet-1K~\cite{deng2009imagenet} classification.
To validate the effectiveness of EfficientViM in speed-accuracy trade-offs, we measure the throughput (im/s), and latency (ms) with the batch size of 256 on an NVIDIA RTX 3090 along with the accuracy, and present the results in~\Cref{tab:main}.
EfficientViM outperforms all previous efficient networks in both speed and accuracy.
After training 450 epochs, EfficientViM-M1 shows a competitive performance with MobileNetV3-L 0.75~\cite{howard2019searching} and EfficientViT-M3~\cite{liu2023efficientvit} while achieving about 90\% and 30\% speedup.
Further, EfficientViM-M2 achieves about 4$\times$ faster speed, with a 0.2\% performance improvement compared to MobileViTV2 0.75~\cite{mehta2022separable} and FastViT-T8~\cite{vasu2023fastvit}.
EfficientViM-M3 and M4 achieve 77.9\% and 79.7\% accuracy, respectively, surpassing all previous works in throughput and accuracy within each section.
Also, EfficientViM consistently outperforms the previous SOTA network, SHViT~\cite{yun2024shvit}, across model sizes while reducing latency, which demonstrates the superiority of EfficientViM.

\begin{table}[t!]
\vspace{15pt}
    \centering
    \footnotesize
    \setlength{\tabcolsep}{3pt}
    \begin{tabular}{l|rcc|cc}
    \toprule
    \multicolumn{1}{c|}{Method}& \multicolumn{1}{c}{\textbf{Thr.}} & \textbf{Thr}$_\text{rel}$ &  \textbf{Top-1} & Params & FLOPs \\
    \midrule
    \rowcolor{lightgray}\textbf{EfficientViM-M2} & 17,005& \textbf{$\times$2.08} & 75.8 & 13.9M & 355M  \\
    ViM-T~\cite{zhu2024vision}  & 1,612  & $\times$0.20 & 76.1  & 7.1M & 1500M  \\
    LocalViM-T~\cite{zhu2024vision} & 593 & $\times$0.07 & 76.2  & 8.0M &  1500M \\
    EfficientVMamba-T~\cite{ma2024efficient} & 2,763& $\times$0.34& 76.5  & 6.0M  &  800M  \\
    MSVMamba-N~\cite{shi2024multi}  & 2,060& $\times$0.25& 77.3  &  6.9M & 864M  \\
    EfficientVMamba-S~\cite{ma2024efficient} & 1,350& $\times$0.17 & 78.7& 11.0M  & 1300M  \\
    \rowcolor{lightgray}\textbf{EfficientViM-M4} & 8,170 & $\times $\textbf{1.00}& 79.6 & 19.6M & 1111M \\
    MSVMamba-M~\cite{shi2024multi} & 1,527 & $\times$0.19 & 79.8 & 11.9M &  1507M   \\
    \bottomrule
    \end{tabular}
    \caption{\textbf{Comparison of EfficientViM with vision Mambas.} \textbf{Thr}$_\text{rel}$ is relative throughput compared to EfficientViM-M4.}
    \label{tab:ssm} 
    \vspace{-10pt}
\end{table}

\noindent\textbf{Comparison with vision Mambas.}
In~\Cref{tab:ssm}, we compare our EfficientViM with the recent vision Mambas including ViM~\cite{zhu2024vision}, LocalViM~\cite{huang2024localmamba}, EfficientVMamba~\cite{pei2024efficientvmamba}, and MSVMamba~\cite{shi2024multi}.
EfficientViM brings promising speed improvements over the prior works.
EfficientViM-M2 is almost 10$\times$ and 29$\times$ faster than ViM-T and LocalViM-T with comparable accuracy.
EfficientViM-M4 shows about 3$\sim$14$\times$ higher throughput than other methods even with competitive performances.
Notably, it outperforms MSVMamba-N and EfficientVMamba-S by 3.1\% and 2.3\%, achieving about 4$\times$ and 6$\times$ higher throughput, respectively.
This reveals that EfficientViM is a highly efficient architecture among mamba-based vision models.

\noindent\textbf{Training with distillation.}
We compare EfficientViM with the prior works~\cite{li2023rethinking, shaker2023swiftformer,vasu2023fastvit,yun2024shvit} trained with distillation objectives in DeiT~\cite{touvron2021training}.
We train the model for 300 epochs, using RegNetY-160~\cite{radosavovic2020designing} as the teacher model.
\Cref{tab:dist} shows that distillation is effective for EfficientViM.
Compared to FastViT-T8\&T12~\cite{vasu2023fastvit}, EfficientViM-M2\&M4 delivers more than 3$\times$ higher throughput along with comparable or even better performance.
Further, EfficientViM outperforms SHViT~\cite{yun2024shvit} up to 0.8\% running at a higher speed.
With distillation, EfficientViM still outperforms all other models in speed-accuracy trade-offs and further establishes a promising Pareto front as shown in~\Cref{fig:intro}.
\begin{table}[t!]
    \centering
    \footnotesize
    \setlength{\tabcolsep}{1.5pt}
    \begin{tabular}{l|c|rcc|cc}
    \toprule
    \multicolumn{1}{c|}{Method} & Size & \multicolumn{1}{c}{\textbf{Thr.}} & \textbf{Thr}$_\text{rel}$  & \textbf{Top-1} & Params & FLOPs \\
    \midrule
    SHViT-S1~\cite{yun2024shvit} &$224^2$& 19,868  & $\times$0.96 & 74.0 & 6.3M  & 241M  \\
    \rowcolor{lightgray}\textbf{EfficientViM-M1} & $224^2$ & \textbf{20,731} & $\times$\textbf{1.00} & \textbf{74.6} & 6.7M  & 239M   \\
    \midrule
    EfficientFormerV2-S0~\cite{li2023rethinking} &$224^2$& 1,350& $\times$0.08 & 75.7 & 3.5M & 407M  \\
    SwifitFormer-XS~\cite{shaker2023swiftformer} &$224^2$& 6,102 & $\times$0.36  &75.7 & 3.5M &  605M\\
    SHViT-S2~\cite{yun2024shvit} &$224^2$& 15,672& $\times$0.94 & 76.2 & 11.4M & 366M \\
    FastViT-T8~\cite{vasu2023fastvit}&$256^2$& 4,365& $\times$0.26 & 76.7  & 3.6M & 705M \\
    \rowcolor{lightgray}\textbf{EfficientViM-M2} & $224^2$ & \textbf{17,005} & $\times$\textbf{1.00} & \textbf{76.7} & 13.9M & 355M  \\
    \midrule
    EfficientMod-XXS~\cite{ma2024efficient} & $224^2$ & 7,022& $\times$0.59  & 77.1 & 4.7M  & 583M \\
    SHViT-S3~\cite{yun2024shvit} &$224^2$& 11,873& $\times$0.99   & 78.3& 14.2M & 601M \\
    SwifitFormer-S~\cite{shaker2023swiftformer} &$224^2$& 4,675& $\times$0.39  &78.5 & 6.1M & 988M \\
    EfficientFormerV2-S1~\cite{li2023rethinking} &$224^2$& 1,248 & $\times$0.10  &79.0  & 6.1M& 668M  \\
    \rowcolor{lightgray}\textbf{EfficientViM-M3} & $224^2$ & \textbf{11,952} & $\times$\textbf{1.00}  & \textbf{79.1} & 16.6M & 656M  \\
    \midrule
    EfficientMod-XS~\cite{ma2024efficient}  & $224^2$ & 5,321 & $\times$0.65 & 79.4 & 6.6M  & 778M  \\
    SHViT-S4~\cite{yun2024shvit} &$256^2$& 8,024 & $\times$0.98  & 80.2 & 16.5M & 986M \\
    FastViT-T12~\cite{vasu2023fastvit}&$256^2$& 2,741 & $\times$0.34  & 80.3 & 6.8M & 1419M \\
    \rowcolor{lightgray}\textbf{EfficientViM-M4} & $256^2$ & \textbf{8,170} & $\times$\textbf{1.00} & \textbf{80.7} & 19.6M & 1111M  \\
    \bottomrule
    \end{tabular}
    \vspace{-5pt}
    \caption{\textbf{Comparison of efficient networks after training with distillation objective in~\cite{touvron2021training}.} \textbf{Thr}$_\text{rel}$ is relative throughput compared to EfficientViM in each split.}
    \label{tab:dist} 
    \vspace{-10pt}
\end{table}

\vspace{5pt}

\begin{table}[t!]
    \centering
    \resizebox{\columnwidth}{!}{%
    \footnotesize
    \setlength{\tabcolsep}{2pt}
    \begin{tabular}{l|c|cccccc}
    \toprule
    \multicolumn{8}{c}{\textit{Head: Mask R-CNN~\cite{he2017mask}}}\\
    \midrule
        \multicolumn{1}{c|}{Method} &\textbf{Lat. (ms)} & \textbf{AP}$^\text{b}$ & \textbf{AP}$^\text{b}_{50}$ & \textbf{AP}$^\text{b}_{75}$ & \textbf{AP}$^\text{m}$ & \textbf{AP}$^\text{m}_{50}$ & \textbf{AP}$^\text{m}_{75}$\\
    \midrule
    
    EfficientNet-B0~\cite{tan2019efficientnet} & 0.95 & 31.9 & 51.0 & 34.5 & 29.4 & 47.9 & 31.2\\
    PoolFormer-S1~\cite{yu2022metaformer} & 1.49 &37.3 & 59.0  &40.1 & 34.6 & 55.8 & 36.9\\
    FastViT-SA12~\cite{vasu2023fastvit} & 1.63 & 38.9 & 60.5 & 42.2 & \textbf{35.9} & 57.6 & \textbf{38.1} \\
    SHViT-S4~\cite{yun2024shvit} &  0.52 & 39.0 & \textbf{61.2} & 41.9 & \textbf{35.9} & \textbf{57.9} & 37.9\\
    \rowcolor{lightgray}\textbf{EfficientViM-M4} & \textbf{0.45}& \textbf{39.3}& 60.2 & \textbf{42.5} & 35.8 & 57.1 &  37.4\\
    \midrule
    \multicolumn{8}{c}{\textit{Head: RetinaNet~\cite{ross2017focal}}}\\
    \midrule
    \multicolumn{1}{c|}{Method} & \textbf{Lat. (ms)} & \textbf{AP} & \textbf{AP}$_{50}$ & \textbf{AP}$_{75}$ & \textbf{AP}$_\text{s}$ & \textbf{AP}$_\text{m}$ & \textbf{AP}$_\text{l}$\\
    \midrule
    PVTV2-B0~\cite{wang2021pvtv2} & 0.87 & 37.2 & 57.2 & 39.5 & \textbf{23.1} & 40.4 & 49.7 \\
    MobileFormer-508M~\cite{chen2022mobile} & 1.09 & 38.0 & 58.3 & 40.3 & 22.9 & 41.2 & 49.7 \\
    EdgeViT-XXS~\cite{pan2022edgevits} &  0.94 & 38.7 & 59.0 & 41.0 & 22.4 & 42.0 & 51.6\\
    SHViT-S4~\cite{yun2024shvit} & 0.52 & 38.8 &\textbf{ 59.8} & \textbf{41.1} & 22.0 & \textbf{42.4 }& 52.7\\
    \rowcolor{lightgray}\textbf{EfficientViM-M4} & \textbf{0.45} & \textbf{38.8} & 59.6 & \textbf{41.1 }& 22.1 & \textbf{42.4 }& \textbf{52.8}\\
    \bottomrule
    \end{tabular}}
    \vspace{-5pt}
    \caption{\textbf{Instance segmentation and object detection and results on COCO-2017~\cite{lin2014microsoft}.}}
    \label{tab:dense} 
    \vspace{-10pt}
\end{table}

\subsection{Dense Predictions}
\label{sec:4.2}
\noindent\textbf{Object detection and instance segmentation.}
We validate the effectiveness of EfficientViM on object detection and instance segmentation using the COCO-2017~\cite{lin2014microsoft} dataset.
For training the models, we follow the settings of previous works~\cite{vasu2023fastvit,yun2024shvit,liu2023efficientvit, pan2022edgevits}, where Mask R-CNN~\cite{he2017mask} and RetinaNet~\cite{ross2017focal} are used for instance segmentation and object detection.
After training the model for 12 epochs (1x schedule) with a batch size of 16, we report the performances and backbone latencies with the resolution of $512^2$, following~\cite{yun2024shvit}.
\Cref{tab:dense} demonstrates that EfficientViM achieves competitive performances while maintaining a faster speed in dense prediction tasks.
Specifically, in instance segmentation with Mask R-CNN, EfficientViM-M4 surpasses SHViT-S4 with the 0.3\% improvements in $\textbf{AP}^\text{b}$ while reducing latency by 0.7ms.
Similarly, in object detection with RetinaNet, EfficientViM-M4 achieves the best average precision of 38.8\% with the lowest latency.
\begin{table}[t!]
\centering
\setlength{\tabcolsep}{5pt}
\footnotesize
\begin{tabular}{l|c|c|c|c}
\toprule
\multicolumn{1}{c|}{\multirow{2}{*}{Metric}}  & PVTV2 & FastViT & EdgeViT & \cellcolor{lightgray}\textbf{EfficientViM} \\
& -B0 & -SA12 & -XXS &\cellcolor{lightgray} \textbf{-M4}\\
\midrule
Latency (ms) & 0.87 & 1.63 & 0.94 & \cellcolor{lightgray}\textbf{0.45} \\
mIoU(\%) & 37.2 & 38.0 & 39.7 & \cellcolor{lightgray}\textbf{41.3} \\
\bottomrule
\end{tabular}
\vspace{-5pt}
\caption{\textbf{Semantic segmentation with SemanticFPN~\cite{kirillov2019panoptic} on ADE20K~\cite{zhou2017scene}}}
\label{tab:seg}
\vspace{-10pt}
\end{table}

\noindent\textbf{Semantic segmentation.}
We also demonstrate the extensibility of EfficientViM on semantic segmentation using ADE20K~\cite{zhou2017scene} benchmark.
Following previous works~\cite{vasu2023fastvit,pan2022edgevits}, we replace the backbone of SemanticFPN~\cite{kirillov2019panoptic} with EfficientViM-M4.
The model is finetuned for 40K iterations with the batch size of 32 and the initial learning rate of 2$\times$10$^{-4}$ decayed using a polynomial scheduler with a power of 0.9.
We report the mIoU and latency of the backbone with 512$^2$ resolutions in~\Cref{tab:seg}.
We observe that EfficientViM-M4 largely surpasses all previous efficient backbones in semantic segmentation, which demonstrates the efficacy of EfficientViM in dense predictions along with the results of object detection and instance segmentation.
It is worth noting that EfficientViM-M4 brings both 1.6\% mIoU gains and about 2$\times$ speedup compared to the second-best model EdgeVit-XXS.

\begin{table}[t!]
\centering
\setlength{\tabcolsep}{2pt}
\footnotesize
\begin{tabular}{l|rrcc|c}
\toprule
\multicolumn{1}{c|}{Method} &  \multicolumn{1}{c}{\textbf{Memory}} & \multicolumn{1}{c}{\textbf{Thr.}} & \textbf{Thr}$_\text{rel}$  & \textbf{Top-1} & Params \\
\midrule
EfficientViT-M4~\cite{liu2023efficientvit} & 870M & 15,807 & $\times$0.93 & 74.3 & 8.8M \\
EMO-2M~\cite{zhang2023rethinking} & 2656M & 4,990  & $\times$0.29 & 75.1 & 2.3M \\
MobileNetV3-L 1.0~\cite{howard2019searching} &  2643M & 9,493 & $\times$0.56 & 75.2 & 5.4M \\
Mobile-Former-151M~\cite{chen2022mobile} &  1800M & 8,890 & $\times$0.52 & 75.2 & 7.6M  \\
SHViT-S2~\cite{yun2024shvit} & 879M & 15,899  & $\times$0.94 & 75.2 & 11.4M  \\
FastViT-T8~\cite{yun2024shvit} & 2811M & 4,365 & $\times$0.26 & 75.6  & 3.6M \\
\rowcolor{lightgray}\textbf{EfficientViM-M2} & 969M & 17,005 & $\times$\textbf{1.00} & 75.8 & 13.9M \\
\bottomrule
\end{tabular}
\vspace{-5pt}
\caption{\textbf{Comparsion on peak memory usage during inference}. \textbf{Thr}$_\text{rel}$ is relative throughput compared to EfficientViM-M2.}
\label{tab:memory}
\vspace{-15pt}
\end{table}


\subsection{Analysis and Ablation studies }
\label{sec:4.3}
\noindent\textbf{Memory Efficiency.}
Our models have a relatively large number of parameters compared to prior works.
Yet, the memory usage of the model in the device is determined by the memory I/O during inference rather than just the number of parameters alone.
We here analyze the peak memory usage of the networks and provide the results in~\Cref{tab:memory}.
Despite the highest parameter counts, EfficientViM shows a competitive memory efficiency while achieving the best throughput.
Notably, we observe that EfficientViM requires only about 1/3 of the peak memory usage of the models having lower parameters, \eg, EMO~\cite{zhang2023rethinking}, MobileNetV3~\cite{howard2019searching}, and FastViT~\cite{vasu2023fastvit}.
Furthermore, the models with a relatively large number of parameters (\eg, EfficientViT, SHViT, and EfficientViM) demonstrate high throughput and low memory consumption, highlighting that the number of parameters is not a critical factor for memory and time efficiency.
Overall, our EfficientViM achieves the best speed and performance maintaining memory efficiency.\\
\noindent\textbf{Ablation studies.}
We conduct ablation studies with EfficientViM-M2 after training 300 epochs and provide the results in~\Cref{tab:abl}.
First, we compare HSM-SSD with other global token mixers, by replacing them with other methods including NC-SSD~\cite{shi2024vssd} and self-attention (SA)~\cite{vaswani2017attention}.
Considering that EfficientViM-M3 with HSM-SSD shows 77.5\% with a throughput of 11,952 (im/s), NC-SSD and SA show a poorer speed-accuracy trade-off than HSM-SSD.
Regarding head choices, we observe that our single-head design brings significant speed-up (15703 $\rightarrow$ 17005 (im/s)) without performance degradation.
Lastly, multi-stage fusion with hidden states (75.4\%) surpasses the accuracy of EfficientViM without fusion (75.1\%) under similar throughput.
Refer to the supplement for more ablation studies.
\begin{table}[t!]
\centering
\setlength{\tabcolsep}{3pt}
\footnotesize
\begin{tabular}{cl|rc|cc}
\toprule
\multicolumn{2}{c|}{Method} & \multicolumn{1}{c}{\textbf{Thr.}} & \textbf{Top-1} & Params & FLOPs \\
\midrule
\rowcolor{lightgray} \red{(A)} & \textbf{EfficientViM-M2} (Base) & 17,005 & 75.4 & 13.9M & 355M  \\
\rowcolor{lightgray} \rowcolor{lightgray} \red{(B)} & \textbf{EfficientViM-M3} & 11,952 & 77.5 & 16.6M & 656M  \\
\midrule
\multicolumn{6}{l}{\textit{a. Token Mixers (Base: HSM-SSD)}}\\
\midrule
\color[HTML]{1f77b4}{(C)} & $(\rightarrow)$ {NC-SSD}~\cite{shi2024vssd} & 9,786 & 76.2 & 13.0M & 382M  \\
\color[HTML]{ff7f0e}{(D)} &$(\rightarrow)$ Self-Attention~\cite{vaswani2017attention} & 13,038 & 76.1 & 13.6M & 416M  \\
\midrule
\multicolumn{6}{l}{\textit{b. Head designs (Base: Single-head with $\mathbf{A} \in \mathbb{R}^{L \times N}$)}}\\
\midrule
\color[HTML]{2ca02c}{(E)} & $(\rightarrow)$ Multi-head & 15,703 & 75.4 & 13.9M & 352M  \\
\midrule
\multicolumn{6}{l}{\textit{c. Multi-stage fusion}}\\
\midrule
\color[HTML]{9467bd}{(F)} &$(\rightarrow)$ None & 17,317  & 75.1& 13.0M & 354M \\
\midrule
\end{tabular}
\vspace{-5pt}
\caption{\textbf{Ablation studies on EfficientViM.} All ablation studies were conducted with EfficientViM-M2 denoted as (Base).}
\label{tab:abl}
\vspace{-10pt}
\end{table}






\section{Related Works}
\noindent\textbf{Efficient vision backbones.}
Earlier works~\cite{howard2017mobilenets,sandler2018mobilenetv2,howard2019searching,iandola2016squeezenet,tan2019efficientnet,ma2018shufflenet,zhang2018shufflenet,chollet2017xception, vasu2023mobileone} have studied improving the trade-off between accuracy and computational efficiency in CNN architectures.
To reduce the complexity of convolution, Xception~\cite{chollet2017xception} introduced depthwise convolutions (DWConv), which have become major techniques used across modern efficient models.
For instance, the pioneering work MobileNet~\cite{howard2017mobilenets} constructed lightweight architectures based on DWConv.
ShuffleNet~\cite{zhang2018shufflenet} and GhostNet~\cite{han2020ghostnet} also have explored additional techniques to enhance CNN by shuffling the channel and generating more feature maps.\\
Following ViTs~\cite{dosovitskiy2020image}, several works~\cite{mehta2021mobilevit, mehta2022separable,vasu2023fastvit, zhang2023rethinking, chen2022mobile, liu2023efficientvit,yun2024shvit,pan2022edgevits} built efficient vision backbones with attention~\cite{vaswani2017attention}.
Some works sparsified the attention~\cite{liu2021swin,chu2021twins,dong2022cswin} to reduce query and key, while another line of works~\cite{wang2020linformer,kitaev2020reformer,xiong2021nystromformer,choromanski2020rethinking} have approximated attention itself with reduced cost.
More recently, EfficientFormer~\cite{li2022efficientformer}, EfficientViT~\cite{liu2023efficientvit}, and SHViT~\cite{yun2024shvit} have proposed hardware-frienly ViT architectures for real-world applications, focusing on actual speed in practical use, rather than FLOPs.\\
\noindent\textbf{Vision Mambas.}
State space model (SSM)~\cite{fu2022hungry,gu2021efficiently,smith2022simplified,mamba,mamba2} has become a popular global token mixer with its favorable linear cost.
Especially, Mamba~\cite{mamba} has introduced a selective scan mechanism on SSM (S6) to enable time-variant selection.
Following Mambas~\cite{mamba,mamba2}, several works~\cite{zhu2024vision,liu2024vmamba,tang2024scalable,shaker2024groupmamba,yang2024plainmamba,huang2024localmamba,ma2024efficient} have proposed SSM-based Vision backbones.
ViM~\cite{zhu2024vision} applied bi-directional SSM to flattened patches considering the causal nature in Mamba.
Similarly, VMamba~\cite{liu2024vmamba}, PlainMamba~\cite{yang2024plainmamba}, and LocalMamba~\cite{huang2024localmamba} have been proposed with the multi-path scanning mechanism.
GroupMamba~\cite{shaker2024groupmamba}, and EfficientVMamba~\cite{pei2024efficientvmamba} further enhance the efficiency by splitting the channels and sharing the learnable parameters for each path.
Recent works, VSSD~\cite{shi2024vssd} and Linfusion~\cite{liu2024linfusion} introduced non-causal SSD to overcome causal properties in Mamba.

\section{Conclusion}
We propose a novel mamba-based architecture, Efficient Vision Mamba (EfficientViM), built with a hidden state mixer-based SSD (HSM-SSD).
With the observation that the primary bottleneck of SSD stems from the linear projections with input tokens, we rearrange the channel mixing operation within the hidden states which serve as reduced latent representations.
We also introduce multi-stage hidden state fusion that integrates both low-level and high-level features.
To enhance practical runtime efficiency, we adopt a single-head design for HSM-SSD to minimize the memory-bound operations, which are typically overlooked when only considering FLOPs.
Our comprehensive experiments demonstrate that EfficientViM significantly improves efficiency and performance over prior works across various tasks.

\paragraph{Acknowledgments.}
This work was supported by Korea University - KT (Korea Telecom) R\&D Center, the National Research Foundation of Korea (NRF) grant funded by the Korea government (MSIT) (NRF-2023R1A2C2005373), the Virtual Engineering Platform Project (Grant No. P0022336), funded by the Ministry of Trade, Industry \& Energy (MoTIE, South Korea), and the Institute of Information \& communications Technology Planning \& Evaluation (IITP) grant funded by the Korean government (MSIT) (No. RS-2024-00457882, AI Research Hub Project).



\maketitlesupplementary
\setcounter{section}{0}
\renewcommand{\thesection}{\Alph{section}}
\setcounter{table}{0}
\renewcommand{\thetable}{\Alph{table}}
\setcounter{figure}{0}
\renewcommand{\thefigure}{\Alph{figure}}

\begin{table}[t!]
    \centering
    \small
    \setlength{\tabcolsep}{5pt}
    \begin{tabular}{c|ccc}
    \toprule
    Configuration & Base & Distillation & FT  \\
    \midrule
    Epochs & 300/450 & 300 & 30 + 30  \\
    Batch size & \multicolumn{2}{c}{2048} & 1024  \\
    Weight decay & \multicolumn{2}{c}{0.05} & 1e-8 \\
    Warmup Epochs & \multicolumn{2}{c}{20} & 0 \\
    Cooldown Epochs & \multicolumn{2}{c}{10} & 0  \\
    Learning rate & \multicolumn{2}{c}{2e-3} & 1e-3 \\
    Min Learning rate & \multicolumn{2}{c}{2e-5} & 1e-5  \\
    Optimizer (Momentum) & \multicolumn{3}{c}{Adamw (0.9, 0.999)}\\
    Gradient Clipping & \multicolumn{3}{c}{0.02}\\
    Learning rate scheduler & \multicolumn{3}{c}{Cosine} \\
    Rand Augment & \multicolumn{3}{c}{rand-m9-mstd0.5-inc1} \\
    Mixup & \multicolumn{3}{c}{0.8} \\
    Cutmix & \multicolumn{3}{c}{1.0} \\
    Mixup switch prob & \multicolumn{3}{c}{0.5} \\
    Random erasing prob & \multicolumn{3}{c}{0.25} \\
    Label smoothing & \multicolumn{3}{c}{0.1} \\
    EMA decay rate & \multicolumn{3}{c}{0.9995} \\
    Teacher model & None & RegNetY-160 & None \\
    
    \bottomrule
    \end{tabular}
    \caption{\textbf{Settings for training EfficientViM.} FT: finetuning with higher resolution images (\Cref{sec:B}).}
    \label{tab:impl} 
\end{table}

\section{Implementation Details}
We use the ImageNet-1K~\cite{deng2009imagenet} to validate the effectiveness of EfficientViM on the image classification task.
For training EfficientViM, we follow the training recipes of previous works~\cite{touvron2022deit,shi2024vssd,yun2024shvit}.
Specifically, all models are trained from scratch with a batch size of 2,048 for 300 epochs using AdamW optimizer~\cite{loshchilov2017decoupled} with a warmup of 20 epochs and a cooldown of 10 epochs.
Following~\cite{li2023rethinking,chen2022mobile, howard2019searching}, we also report the results after training 450 epochs.
During training, we adopt a cosine annealing~\cite{loshchilov2016sgdr} scheme with the initial learning rate of $2 \times 10^{-3}$ decreasing to $2 \times 10^{-5}$.
The weight decay of 0.05 and gradient clipping with a threshold of 0.02 are used.
Also, MESA~\cite{du2022sharpness} and EMA with the decay rate of 0.9995 is adopted following~\cite{shi2024vssd,han2024demystify}.
For data augmentation, we follow DeiT~\cite{touvron2021training} using Mixup~\cite{zhang2017mixup} \& CutMix~\cite{yun2019cutmix} with a Label smoothing~\cite{szegedy2016rethinking}, RandAugment~\cite{cubuk2019autoaugment}, and Random Erasing~\cite{zhong2020random}.
We report the throughput and latency with the batch size of 256 on Nvidia RTX 3090 GPU.

\noindent Additionally, we finetune the model with the batch size of 1024, using cosine annealing with the initial learning rate of $1 \times 10^{-3}$, for 30 epochs at a resolution of $384^2$, followed by an additional 30 epochs at $512^2$.
For a fair comparison, we employ pre-trained models trained for 300 epochs.
Also, to report the throughput of the models with extremely high-resolution images in~\Cref{fig:scale}, we start with a batch size of 256 and halve it once the memory exceeds the GPU limit as the resolution increases.
Regarding training with distillation, all settings are the same as in Table 3. of the main paper except for the guidance from the teacher model of RegNetY-160~\cite{radosavovic2020designing} following DeiT~\cite{touvron2021training}.


\begin{table}[t!]
    \centering
    \footnotesize
    \setlength{\tabcolsep}{3pt}
    \begin{tabular}{l|c|ccc|cc}
    \toprule
    \multicolumn{1}{c|}{Method} & Size & \textbf{Thr.} & \textbf{Thr}$_\text{rel}$ & \textbf{Top-1} & Params & FLOPs\\
    \midrule
    SHViT-S4~\cite{yun2024shvit} & $384^2$ & 3,685 & $\times$0.99 & 81.0 & 16.5M & 2225M \\
    \rowcolor{lightgray}\textbf{EfficientViM-M4}&  $384^2$ & 3,724 & $\times$1.00 & 80.9 & 21.3M & 2379M \\
    \midrule
    SHViT-S4~\cite{yun2024shvit} &$512^2$ & 2,122 & $\times$0.86  &82.0 & 16.5M & 3973M\\
    \rowcolor{lightgray}\textbf{EfficientViM-M4}& $512^2$ & 2,452 & $\times$1.00 & 81.9 & 21.3M & 4154M\\
    \bottomrule
    \end{tabular}
    \caption{\textbf{Classification results on ImageNet-1K~\cite{deng2009imagenet} after finetuning with higher resolutions.} \textbf{Thr}$_\text{rel}$ is relative throughput compared to EfficientViM-M4.}
    \label{tab:high} 
    \vspace{-5pt}
\end{table}

\begin{figure}[t]
     \centering
     \includegraphics[width=0.9\columnwidth]{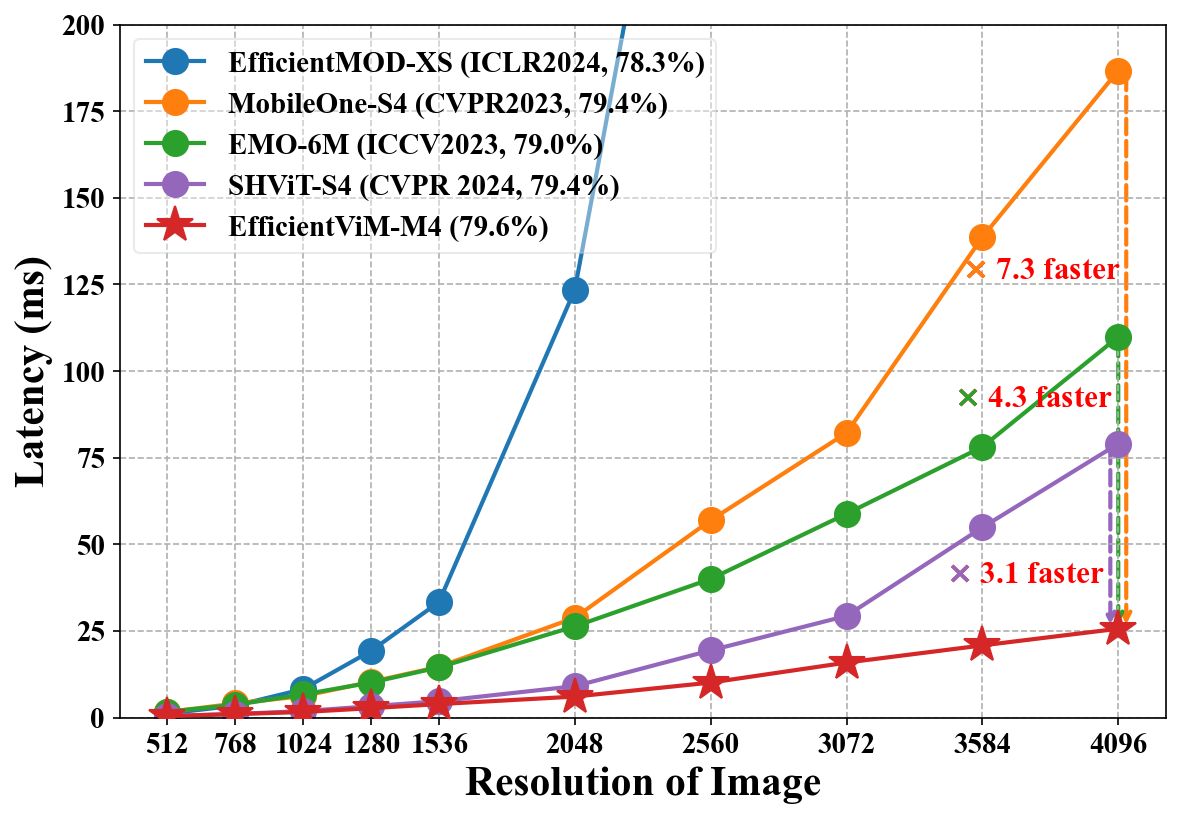}
    \vspace{-10pt}
    \caption{\textbf{Latency comparison of recent efficient networks for an extremely high-resolution image.}}
    \label{fig:scale}
    \vspace{-15pt}
\end{figure}
\section{EfficientViM with high-resolution images.}
\label{sec:B}
Following~\cite{yun2024shvit}, we also explore the applicability of EfficientViM on high-resolution images after fine-tuning 30 epochs at a resolution of $384^2$, followed by an additional 30 epochs at $512^2$.
For a fair comparison, we use the EfficientViM-M4 pre-trained for 300 epochs.
In $384^2$ size, EfficientViM demonstrates competitive performance as presented in~\Cref{tab:high}.
Interestingly, when the resolution increases, the throughput gap between EfficientViM and SHViT~\cite{yun2024shvit} gets larger, resulting in more than 15\% speedup compared to SHViT in $512^2$ while achieving comparable accuracy.
We further investigate the scalability of our method in extremely high-resolution images beyond $512^2$, by comparing the latency of the models while scaling the resolution from $512^2$ to $4096^2$.
As depicted in~\Cref{fig:scale}, we observe an advantage of EfficientViM over the recent state-of-the-art method on extremely high-resolution images.
EfficientViM shows about 3$\times$, 4$\times$, and 7$\times$ faster speed compared to SHViT, EMO~\cite{zhang2023rethinking}, and MobileOne~\cite{vasu2023mobileone}, respectively.
This notable result highlights the scalability of EfficientViM for high-resolution images based on linear cost of HSM-SSD. 


\begin{table}[t!]
\centering
\setlength{\tabcolsep}{3pt}
\footnotesize
\begin{tabular}{cl|rc|cc}
\toprule
\multicolumn{2}{c|}{Method} & \multicolumn{1}{c}{\textbf{Thr.}} & \textbf{Top-1} & Params & FLOPs \\
\midrule
\rowcolor{lightgray}\red{(A)} &\textbf{EfficientViM-M2} (Base) & 17,005 & 75.4 & 13.9M & 355M  \\
\rowcolor{lightgray}\red{(B)} & \textbf{EfficientViM-M3} & 11,952 & 77.5 & 16.6M & 656M  \\
\midrule
\multicolumn{5}{l}{\textit{a. Token Mixers (Base: HSM-SSD)}}\\
\midrule
\color[HTML]{1f77b4}{(C)} &$(\rightarrow)$ NC-SSD~\cite{shi2024vssd} & 9,786 & 76.2 & 13.0M & 382M  \\
\color[HTML]{ff7f0e}{(D)} &$(\rightarrow)$ Self-Attention~\cite{vaswani2017attention} & 13,038 & 76.1 & 13.6M & 416M  \\
\midrule
\multicolumn{5}{l}{\textit{b. Head designs (Base: Single-head (SH) with $\mathbf{A} \in \mathbb{R}^{L \times N}$)}}\\
\midrule
\color[HTML]{2ca02c}{(E)} &$(\rightarrow)$ Multi-head & 15,703 & 75.5 & 13.9M & 352M  \\
\color[HTML]{9467bd}{(F)} &$(\rightarrow)$ SH w. $\mathbf{a}\in \mathbb{R}^{L}$ & 17,081   & 75.2& 13.9M & 352M \\
\midrule
\multicolumn{5}{l}{\textit{c. Multi-stage fusion (Base: $\mathbf{h}^{(s)}$)}}\\
\midrule
\color[HTML]{e377c2}{(G)} &$(\rightarrow)$ Fusion with $\mathbf{x}^{(s)}$& 17,041  & 75.3 & 13.4M & 355M \\
\color[HTML]{7f7f7f}{(H)} &$(\rightarrow)$ None & 17,317  & 75.1& 13.0M & 354M \\
\midrule
\multicolumn{5}{l}{\textit{d. \# states ($N$) of each stage (Base: $[49,25,9]$)}}\\
\midrule
\color[HTML]{bcbd22}{(I)} &$(\rightarrow) [9,25,49]$ & 16,476  & 75.4 & 14.0M & 407M \\
\color[HTML]{17becf}{(J)} &$(\rightarrow) [25,25,25]$ & 16,991 & 75.2 & 13.9M  & 373M   \\
\midrule
\multicolumn{5}{l}{\textit{e. Normalization (Base: Partial LN)}}\\
\midrule
(L) &$(\rightarrow)$ Full BN & 17,432 &\textit{NaN} & 13.9M & 355M \\
\bottomrule
\end{tabular}
\caption{\textbf{Ablation studies on EfficientViM.} All ablations are conducted with EfficientViM-M2. See \Cref{fig:abl}
for a visualization comparing the ablated models with the Pareto front of EfficientViM.}
\label{tab:abl2}
\vspace{-10pt}
\end{table}

\begin{figure}[t!]
    \centering
    \begin{subfigure}{0.8\columnwidth}
     \centering
     \includegraphics[width=\columnwidth]{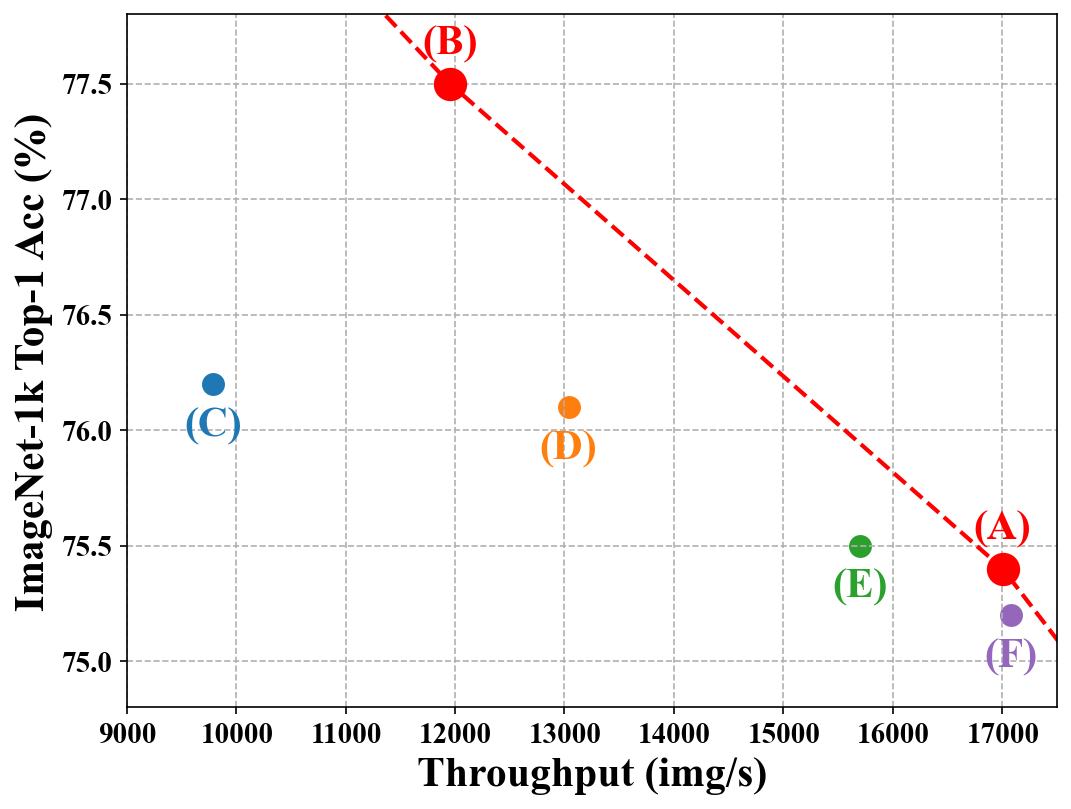}
    \caption{\textbf{Ablations on the token mixer and head design (C-F).}}
    \label{fig:abl1}
    \end{subfigure}
    \vspace{10pt}
    \begin{subfigure}{0.8\columnwidth}
     \centering
     \includegraphics[width=\columnwidth]{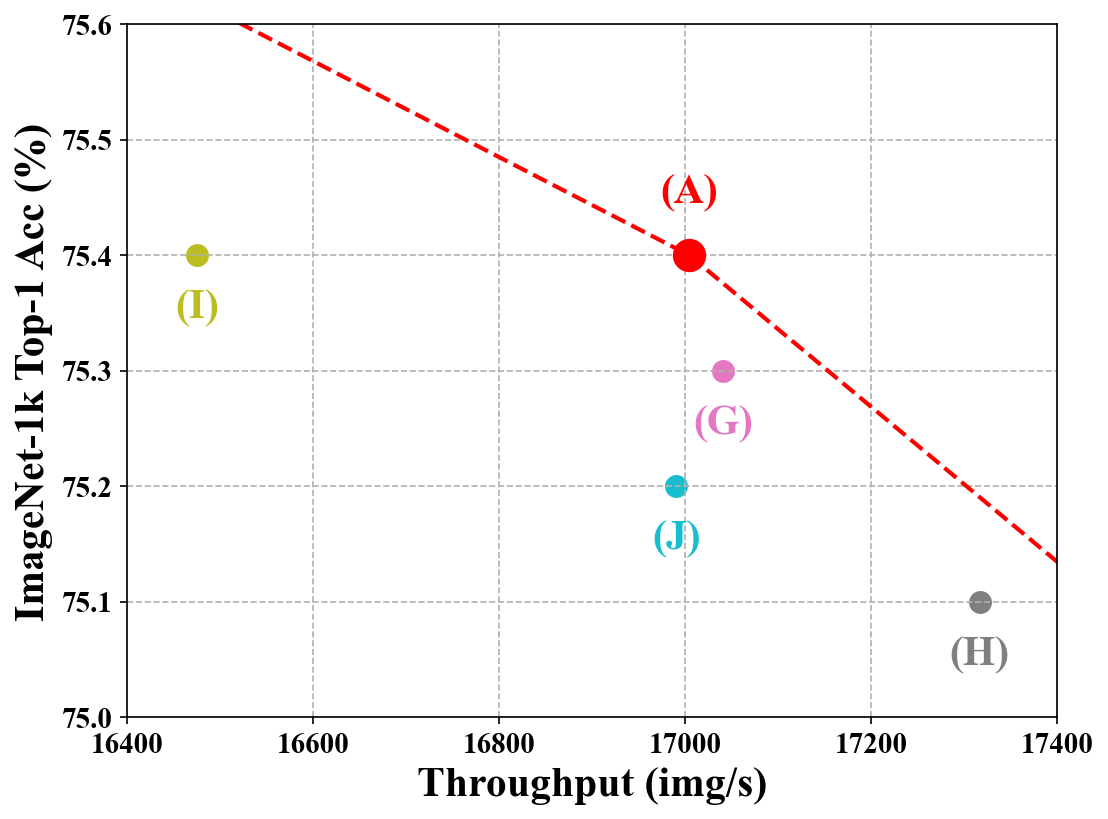}
    \caption{\textbf{Ablations on Multi-stage fusion and \# states (G-J).}}
    \label{fig:abl2}
    \end{subfigure}

    \caption{\textbf{Ablation studies on EfficientViM.} Refer to~\Cref{tab:abl2} for the corresponding models. \red{Red} line indicates the Pareto frontier of EfficientViM.}
    \label{fig:abl}
\end{figure}

\section{Ablation Studies}
Here, we show the effectiveness of HSM-SSD by ablating the proposed components.
The results are summarized in~\Cref{tab:abl2}, and~\Cref{fig:abl}.
First, we compare HSM-SSD with other global token mixers, by replacing them with other methods including (C) NC-SSD~\cite{shi2024vssd} and (D) self-attention (SA)~\cite{vaswani2017attention}.
Considering that EfficientViM-M3 with HSM-SSD shows 77.5\% with a throughput of 11,952 (im/s), NC-SSD and SA show a poorer speed-accuracy trade-off than HSM-SSD.
Regarding head choices, we observe that our single-head design brings significant speed-up (17005 (im/s)) over (E) multi-head (15,703 (im/s)) without performance degradation.
Additionally, defining the importance score per state as $\mathbf{A}\in\mathbb{R}^{L \times N}$ to mimic multi-head leads to the +0.2\% gain with a minor increase in latency, compared to using the original score (F) $\mathbf{a} \in \mathbb{R}^L$.
Note that all ablates models (C-F) are placed under the Pareto front of EfficientViM (\Cref{fig:abl1}), which proves the efficacy of HSM-SSD and our single-head design.\\
Also, multi-stage fusion with hidden states (75.4\%) surpasses the accuracy of (G) the fusion with the output feature maps $\mathbf{x}^{(s)}$ (75.3\%) and 
(H) the EfficientViM without fusion (75.1\%) under similar throughput.
For the number of states $N$, we observe that an increasing schedule with respect to the stages is more effective than (I) a decreasing or (J) constant schedule. 
See (G)-(J) in~\Cref{tab:abl2} and~\Cref{fig:abl2} for the ablation studies on multi-stage fusion and the number of states.
Additionally, for normalization, using (L) batch normalization (BN) across all operations is simple and fast, but, this approach leads to numerical instability.
Therefore, we apply layer normalization (LN) only before HSM-SSD, and BN for the rest.

\begin{table}[t]
    \centering
    \footnotesize
    \setlength{\tabcolsep}{3pt}
    \begin{tabular}{l|c|rc|cc}
    \toprule
    \multicolumn{1}{c|}{Method} & Token Mixer & \multicolumn{1}{c}{\textbf{Thr.}} &   \textbf{Top-1} & Params & FLOPs \\
    \midrule
    VSSD-M~\cite{shi2024vssd} & NC-SSD & 1459 & 82.5 & 14M & 2.3G \\
    VSSD-T~\cite{shi2024vssd} & NC-SSD & 947 & 84.1 & 24M & 4.5G \\
    \rowcolor{lightgray} VSSD-T & $\rightarrow$ \textbf{HSM-SSD} & 1660 &82.7 & 24M& 3.7G\\
    \bottomrule
    \end{tabular}
    \caption{\textbf{Comparison of HSM-SSD with NC-SSD.}}

    \label{tab:ssm2} 
\end{table}

\begin{figure*}[t!]
 \centering
    \begin{subfigure}{0.16\linewidth}
        \centering
        \includegraphics[width=\linewidth]{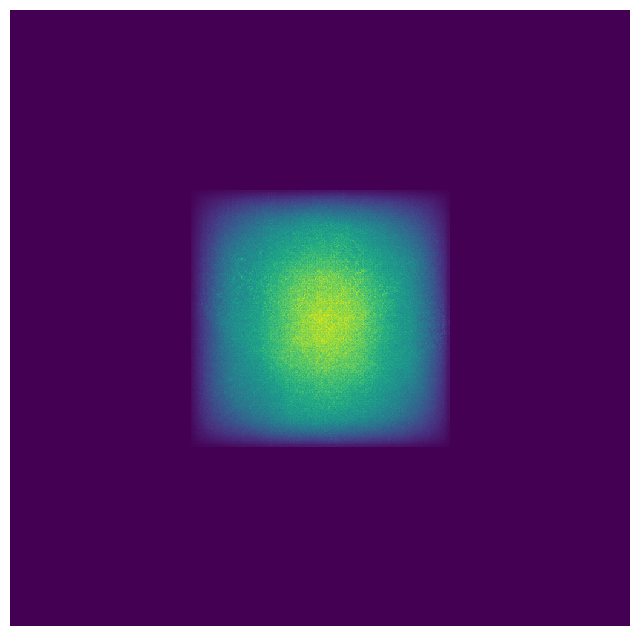}
        \caption{ResNet50}
    \end{subfigure}
    \hfill
    \begin{subfigure}{0.16\linewidth}
        \centering
        \includegraphics[width=\linewidth]{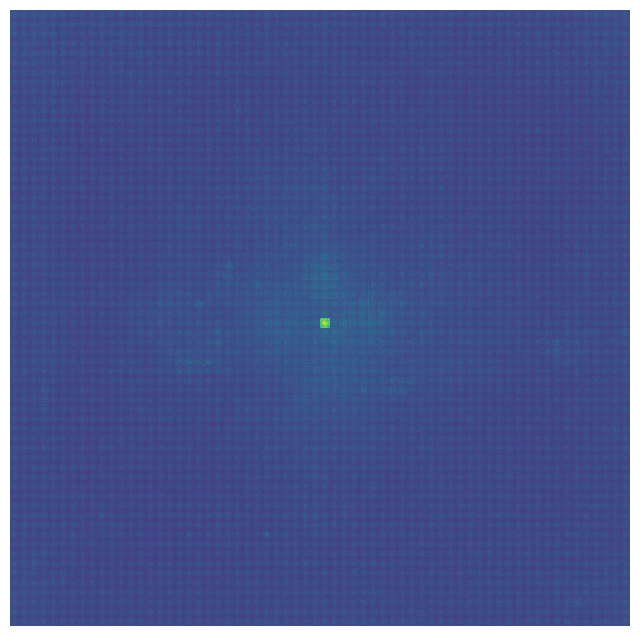}
        \caption{DeiT-S}
    \end{subfigure}
    \hfill
    \begin{subfigure}{0.16\linewidth}
        \centering
        \includegraphics[width=\linewidth]{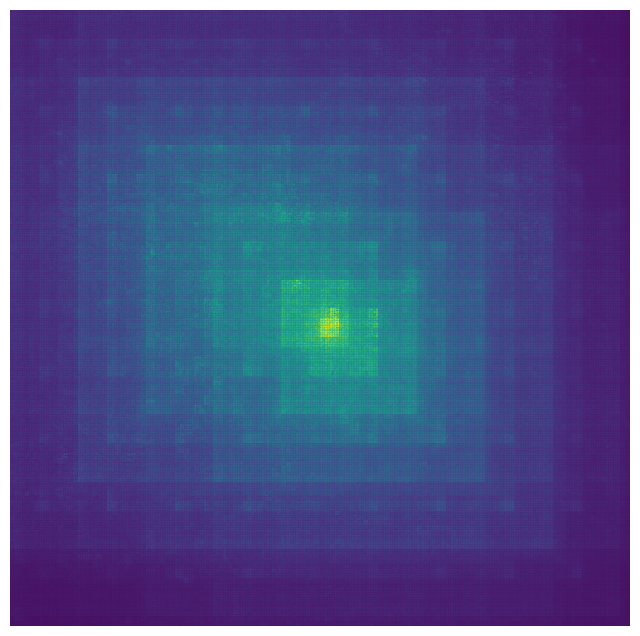}
        \caption{Swin-T}
    \end{subfigure}
    \hfill
    \begin{subfigure}{0.16\linewidth}
        \centering
        \includegraphics[width=\linewidth]{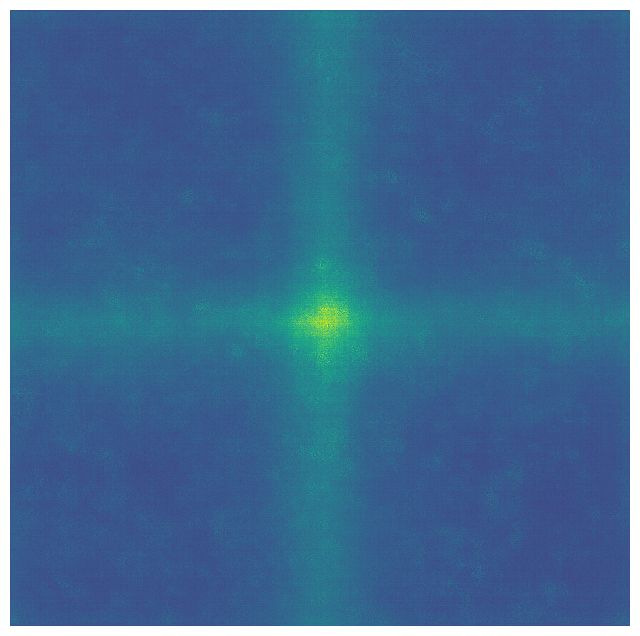}
        \caption{VMamba-T}
    \end{subfigure}
    \hfill
    \begin{subfigure}{0.16\linewidth}
        \centering
        \includegraphics[width=\linewidth]{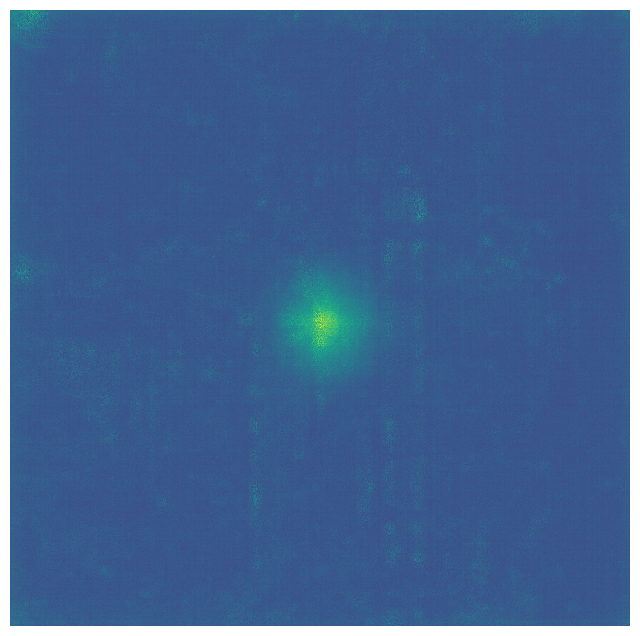}
        \caption{VSSD-T}
    \end{subfigure}
    \hfill
    \begin{subfigure}{0.16\linewidth}
        \centering
        \includegraphics[width=\linewidth]{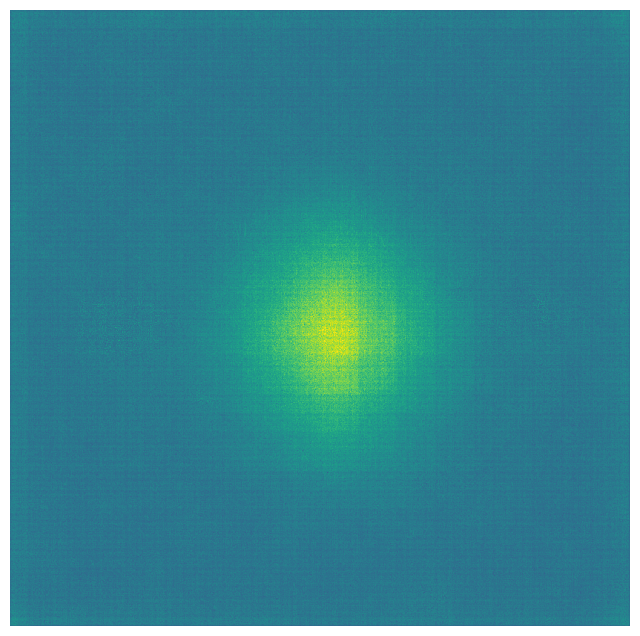}
        \caption{EfficientViM-M4}
    \end{subfigure}
    \caption{\textbf{Comparison of Effective Receptive Fields (ERF)~\cite{luo2016understanding}}}
    \label{fig:erf}
\end{figure*}
\section{Comparison of HSM-SSD with NC-SSD}
To show the advantage of the proposed HSM-SSD over NC-SSD, we replace NC-SSD of VSSD-T~\cite{shi2024vssd} with HSM-SSD and train the model following the original training configuration provided in the official repository.
In Table~\ref{tab:ssm2}, after replacing the token mixer with HSM-SSD, VSSD-T with HSM-SSD demonstrates a significant increase in the throughput (1.8$\times$).
Notably, compared to scaling down the models to smaller sizes (\eg, VSSD-M), replacing the token mixer with HSM-SSD provides better speed-accuracy trade-offs (14\% faster, +0.2\% accuracy), highlighting the advantage of HSM-SSD over NC-SSD.
We also provide the qualitative comparison of HSM-SSD with NC-SSD in the following section.

\section{Effective Receptive Field of HSM-SSD}
In this section, we qualitatively compare the HSM-SSD with previous token mixers including convolutions in ResNet50~\cite{he2016deep}, attentions in vision Transformers such as DeiT-S~\cite{touvron2021training} and Swin-T~\cite{liu2021swin}, and SSM and SSD variants in vision Mambas like VMamba-T~\cite{liu2024vmamba} and VSSD-T~\cite{shi2024vssd}.
We here analyze the Effective Receptive Fields (ERF)~\cite{luo2016understanding} of each model, which quantifies the region of the input that contributes to the output.
In~\Cref{fig:erf}, we visualize the ERF with respect to the central pixel of the output feature maps.
Among various models, EfficientViM-M4 with HSM-SSD shows a global receptive field rather than focusing on a specific region.
For instance, ResNet50 shows a relatively small ERF due to its intrinsic locality of convolution.
The attention mechanism in DeiT-S predominantly focuses on the central pixel itself, and the shifted window attention in Swin-T limits the global receptive field.
Further, since SSM is conducted after flattening the image patch both vertically and horizontally in VMamba-T, it generates an unnatural cross pattern in ERF.
VSSD-T shows a relatively better global receptive field, yet it still largely depends on the close region.
On the other hand, EfficientViM-M4 generates a global effective receptive field (ERF) similar to that of VSSD-T but extracts more information from all regions, enabling it to capture the global dependencies better.

\section{CPU \& Mobile Latency}
To understand the applicability of EfficientViM in a resource-constrained environment, we here provide the latencies of vision backbones in GPU, CPU, and mobile devices (\Cref{tab:edge}).
The latencies are measured with a batch size of 256 on an NVIDIA RTX 3090 GPU, 16 on an AMD EPYC 7742 CPU, and 1 on an iPhone 16 (iOS 18.1).
For mobile latencies, we use CoreML~\cite{Core_ML} library.
EfficientViM-M4 achieves the highest accuracy of 79.6\% with the lowest latency on GPUs and competitive latency on CPU and iPhone 16.
Although EfficientViM-M4 shows slightly higher latency than a few of the prior works in CPU and mobile, EfficientViM-M4 shows a significantly lower latency as the resolution increases (\Cref{fig:scale2}).
In the resolution of $2048^2$, EfficientViM-M4 achieves at least 58\% and 20\% reductions in latency compared to previous works in iPhone 16 and CPU, respectively.
To summarize, EfficientViM serves as a general solution suitable for both GPU and edge devices. 
Furthermore, EfficientViM is an effective backbone for real-world applications where high-resolution images are given, such as in image generation, object detection, and instance segmentation.
\begin{table}[t!]
\centering
\setlength{\tabcolsep}{5pt}
\footnotesize
\begin{tabular}{l|crc|l}
\toprule
\multicolumn{1}{c|}{\multirow{2}{*}{Method}} & \multicolumn{3}{c|}{Latency (ms)}  & \multicolumn{1}{c}{\multirow{2}{*}{\textbf{Top-1}}} \\
& GPU & \multicolumn{1}{c}{CPU} & Mobile &\\
\midrule
MobileViTV2 1.0~\cite{mehta2022separable} & 0.34 & 138.8 & 1.1 & 78.1 \\
EfficientMod-XS~\cite{ma2024efficient} & 0.19 & 33.1 & 0.9 & 78.3 (79.4)\\
MobileFormer-508M~\cite{chen2022mobile} & 0.22 & 29.7 & 2.1 & 79.3 \\
FastViT-T12~\cite{vasu2023fastvit} & 0.37 & 81.3 & 1.8 & 79.1 (80.3) \\
MobileOne-S4~\cite{vasu2023mobileone} & 0.33 & 79.9 & 1.0 & 79.4 \\
SHViT-S4~\cite{yun2024shvit} & 0.12 & 27.4 & 0.9 & 79.4 (80.2) \\
\rowcolor{lightgray} \textbf{EfficientViM-M4} & 0.12 & 32.1 & 1.0 & 79.6 (80.7) \\
\bottomrule
\end{tabular}
\caption{\textbf{Latency comparison of EfficientViM-M4 with prior works.} The number in parentheses indicates the performance with distillation.}
\label{tab:edge}
\end{table}

\begin{figure*}[t!]
    \centering
    \begin{subfigure}{0.48\linewidth}
     \centering
     \includegraphics[width=\columnwidth]{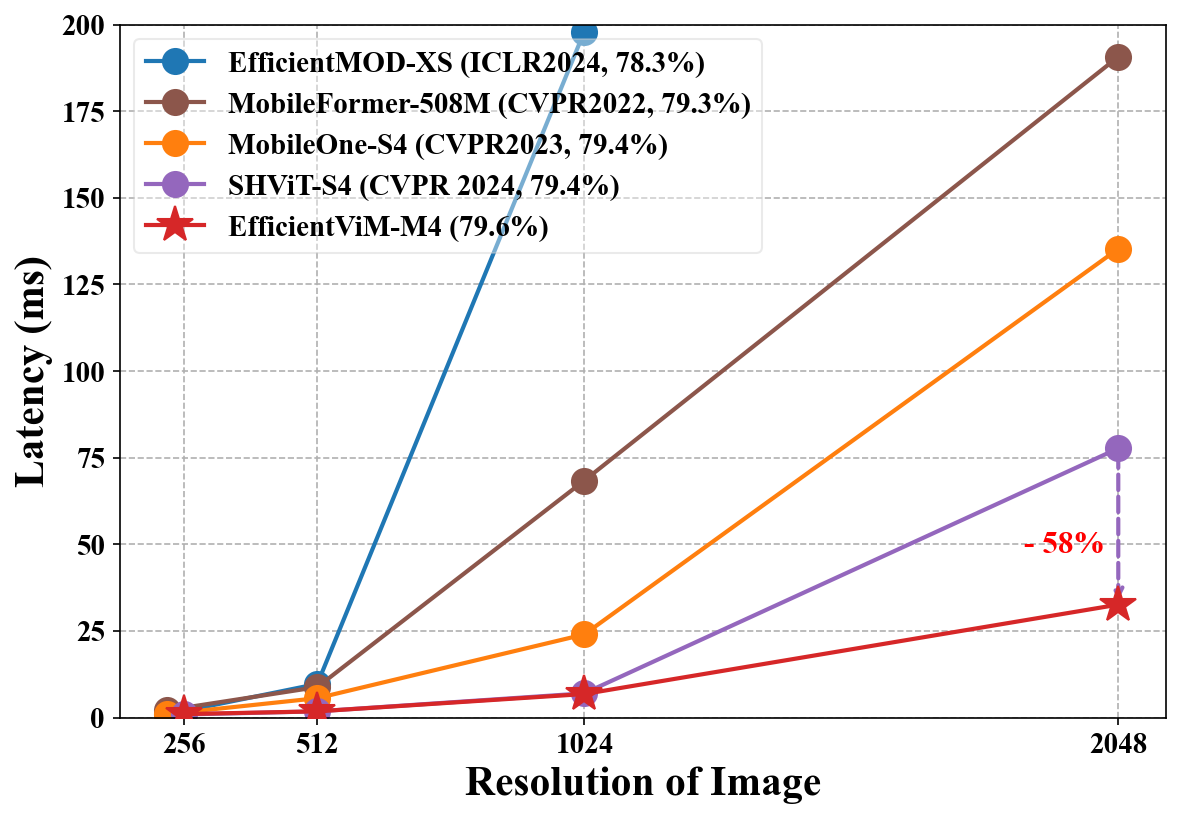}
    \caption{\textbf{Mobile latency comparison.}}
    \label{fig:scale_mobile}
    \end{subfigure}
    \hfill
    \begin{subfigure}{0.48\linewidth}
     \centering
     \includegraphics[width=\columnwidth]{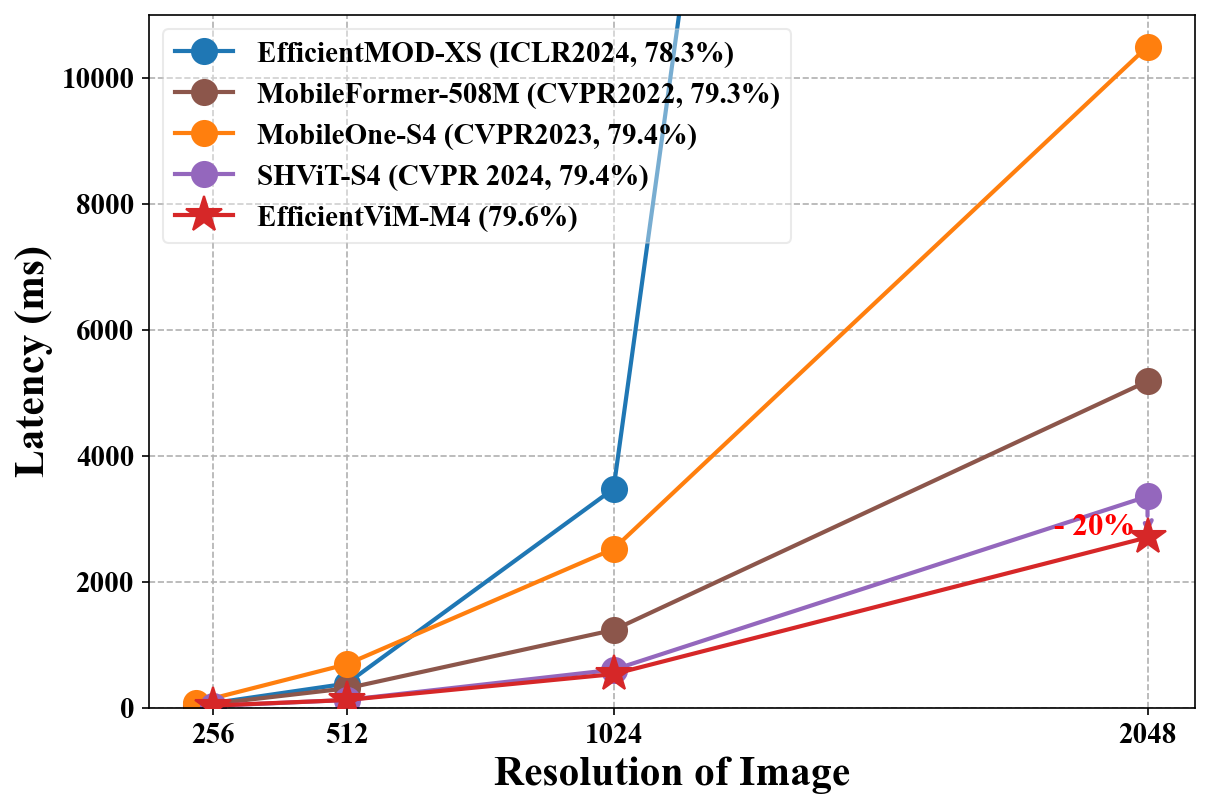}
    \caption{\textbf{CPU latency comparison.}}
    \label{fig:scale_cpu}
    \end{subfigure}

    \caption{\textbf{Mobile and CPU latency comparison.}}
    \label{fig:scale2}
\end{figure*}

\setcounter{equation}{9}

\section{Proof for Proposition 1}
\setcounter{proposition}{0} 
\begin{proposition}
Let $N=L$, $\mathbf{a} \mathbbm{1}^\top_L \odot \mathbf{B} = \mathbbm{I}_L$, and  $\mathbf{C} \in \mathbb{R}^{L \times L}$ be diagonal. Then, $\text{HSM-SSD}(\mathbf{x}, \mathbf{a}, \mathbf{B}, \mathbf{C})$ is equivalent to $\text{NC-SSD}(\mathbf{x}, \mathbf{a}, \mathbf{B}, \mathbf{C})$ including gating and output projection, as $\mathbf{x}_\text{out} = f(\mathbf{y}) = \mathbf{C} f(\mathbf{h})$.
\end{proposition}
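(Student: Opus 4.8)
The plan is to show that the two hypotheses eliminate, one at a time, the two sources of approximation that separate $f(\mathbf{y})$ from $\mathbf{C}f(\mathbf{h})$ in Eq.~\eqref{eq:HSM}. Writing both expressions out, the original $f(\mathbf{y}) = (\mathbf{C}\mathbf{h} \odot \sigma(\mathbf{x}_\text{in}\mathbf{W}_\mathbf{z}))\mathbf{W}_\text{out}$ differs from $\mathbf{C}f(\mathbf{h}) = \mathbf{C}\bigl((\mathbf{h} \odot \sigma(\mathbf{h}_\text{in}\mathbf{W}_\mathbf{z}))\mathbf{W}_\text{out}\bigr)$ in exactly two places: the gating signal is computed from $\mathbf{x}_\text{in}$ versus $\mathbf{h}_\text{in}$, and the matrix $\mathbf{C}$ is applied \emph{before} versus \emph{after} the channel-mixing map $f$. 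The shared hidden state is the same object in both, since by Eq.~\eqref{eq:hidden} we have $\mathbf{h} = \mathbf{h}_\text{in}\mathbf{W}_\text{in}$ in either formulation.

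First I would invoke the hypothesis $\mathbf{a}\mathbbm{1}^\top_L \odot \mathbf{B} = \mathbbm{I}_L$. Substituting into the definition $\mathbf{h}_\text{in} = (\mathbf{a}\mathbbm{1}^\top_N \odot \mathbf{B})^\top \mathbf{x}_\text{in}$ with $N = L$ gives $\mathbf{h}_\text{in} = \mathbbm{I}_L^\top \mathbf{x}_\text{in} = \mathbf{x}_\text{in}$. Hence the two gating arguments coincide, $\sigma(\mathbf{h}_\text{in}\mathbf{W}_\mathbf{z}) = \sigma(\mathbf{x}_\text{in}\mathbf{W}_\mathbf{z})$, removing the first discrepancy. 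Abbreviating this common gate as $\mathbf{g} := \sigma(\mathbf{x}_\text{in}\mathbf{W}_\mathbf{z})$, the claim reduces to the purely algebraic identity $(\mathbf{C}\mathbf{h} \odot \mathbf{g})\mathbf{W}_\text{out} = \mathbf{C}\bigl((\mathbf{h} \odot \mathbf{g})\mathbf{W}_\text{out}\bigr)$.

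The core step is then a row-wise argument exploiting that $\mathbf{C} = \text{diag}(c_1,\dots,c_L)$ is diagonal, so left-multiplication by $\mathbf{C}$ simply scales the $i$-th row of any matrix by the scalar $c_i$. I would verify that this per-row scaling passes transparently through both operations: since the Hadamard product acts row-by-row, $(c_i\mathbf{h}_i)\odot\mathbf{g}_i = c_i(\mathbf{h}_i\odot\mathbf{g}_i)$, and since $\mathbf{W}_\text{out}$ acts on the right, $(c_i\mathbf{v})\mathbf{W}_\text{out} = c_i(\mathbf{v}\mathbf{W}_\text{out})$. Chaining these, the $i$-th row of either side equals $c_i(\mathbf{h}_i\odot\mathbf{g}_i)\mathbf{W}_\text{out}$, so the two matrices agree and $f(\mathbf{y}) = \mathbf{C}f(\mathbf{h})$.

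The main obstacle is pinning down precisely why the diagonal assumption is indispensable rather than cosmetic: a general $\mathbf{C}$ mixes rows of $\mathbf{h}$, and such row-mixing does not commute with the row-wise gate $\mathbf{g}$, so $\mathbf{C}$ cannot be factored back out past the Hadamard product. The diagonal hypothesis is exactly what collapses $\mathbf{C}$ to scalar multiplication on each row, which is the unique structure that commutes with both gating and the output projection; making this commutation rigorous is the crux of the argument.
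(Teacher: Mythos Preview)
Your proposal is correct and follows essentially the same approach as the paper's proof: first use $\mathbf{a}\mathbbm{1}^\top_L \odot \mathbf{B} = \mathbbm{I}_L$ to collapse $\mathbf{h}_\text{in}$ to $\mathbf{x}_\text{in}$, then use diagonality of $\mathbf{C}$ to commute it past the Hadamard gate and the right-multiplication by $\mathbf{W}_\text{out}$. Your row-wise commutation argument actually makes explicit the step the paper leaves implicit when it passes from $\mathbf{C}(\mathbf{h}\odot\sigma(\mathbf{x}_\text{in}\mathbf{W}_\mathbf{z}))\mathbf{W}_\text{out}$ to $(\mathbf{C}\mathbf{h}\odot\sigma(\mathbf{x}_\text{in}\mathbf{W}_\mathbf{z}))\mathbf{W}_\text{out}$.
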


\begin{proof}

\noindent It is sufficient to show that $\mathbf{C}f(\mathbf{h})$ of HSM-SSD is equivalent to $f(\mathbf{y})=  (\mathbf{C}\mathbf{h} \odot \sigma(\mathbf{x}_\text{in}\mathbf{W}_\mathbf{z})) \mathbf{W}_\text{out}$ in order to prove the proposition.
Here, based on the assumption, the following holds:
\begin{align*}
\mathbf{C}&f\left(\mathbf{h}\right) \\
&=\mathbf{C}\left((\mathbf{h} \odot \sigma(\mathbf{h}_\text{in} \mathbf{W}_\mathbf{z}))\mathbf{W}_\text{out}\right)\\
&=\mathbf{C}\left((\mathbf{h} \odot \sigma(((\mathbf{a} \mathbbm{1}^\top_L \odot \mathbf{B})^\top \mathbf{x}_\text{in}) \mathbf{W}_\mathbf{z}))\mathbf{W}_\text{out}\right)\\
&=\mathbf{C}(\mathbf{h} \odot \sigma( \mathbf{x}_\text{in} \mathbf{W}_\mathbf{z}))\mathbf{W}_\text{out}\\
&=(\mathbf{C}\mathbf{h} \odot \sigma( \mathbf{x}_\text{in} \mathbf{W}_\mathbf{z}))\mathbf{W}_\text{out} = f(\mathbf{y})
\end{align*}
\end{proof}
\noindent\textit{Remarks.} We assume that $\mathbf{C}$ is a diagonal matrix but $\mathbf{C}$ is dependent on $\mathbf{x}_\text{in}$ since $\mathbf{C} = \mathbf{x}_\text{in}\mathbf{W}_\mathbf{C}$. Unfortunately, there does not exist a weight matrix $\mathbf{W}_\mathbf{C}$ that makes $\mathbf{C}$ diagnoal for arbitrary inputs $\mathbf{x_{in}}$.
We provide this proposition to understand the  relationship between the HSM-SSD and NC-SSD operations. 
This implies that in specific conditions with particular data, the two methods yield the same result. However, one approach does not generalize the other.


\section{Discussion of multi-stage hidden stage fusion}
In this section, we briefly discuss how multi-stage hidden state fusion (MSF) provides a performance boost, although the earlier layers generally provide less accurate logits.
Note that our MSF leverages the logits across the layer as \emph{deep supervision} and \emph{multi-scale representation} to improve the performance.
MSF aligns with the concept of `deep supervision' in pioneering works, such as DSN~\cite{lee2015deeply}, and U-Net++~\cite{zhou2018unet++}.
Specifically, during training, MSF can be interpreted as auxiliary classification tasks, encouraging even the earlier layers to learn more discriminative features.
Further, the earlier layers generally capture fine-grained patterns, while later layers extract high-level semantics, i.e., DenseNet~\cite{huang2017densely} and FPN~\cite{lin2017feature}.
By combining these complementary representations with the learnable coefficients, we take advantage of the ensemble effect from `multi-scale representations'.
In fact, it is well-known that an ensemble of the models often outperforms each model, highlighting the benefits of HSM.
As a results, MSF brings substantial improvements on EfficientViM.

{
    \small
    \bibliographystyle{ieeenat_fullname}
    \bibliography{main}
}


\end{document}